\newtheorem{theorem}{Theorem}[section]
\newtheorem{lemma}[theorem]{Lemma}
\newtheorem{example}[theorem]{Example}
\newtheorem{definition}[theorem]{Definition}
\newtheorem{proposition}[theorem]{Proposition}
\tikzstyle{arg}=[circle,draw,thick,minimum size=8mm]
\tikzstyle{targ}=[circle,draw=\truecolor,thick,minimum size=8mm]
\tikzstyle{farg}=[circle,draw=\falsecolor,thick,minimum size=8mm]
\tikzstyle{att}=[-stealth,thick]
\tikzstyle{link}=[->,thick]
\def\truecolor{black}
\def\falsecolor{black}
\newcommand{\tvt}{\textcolor{\truecolor}{\mathbf{t}}}
\newcommand{\tvf}{\textcolor{\falsecolor}{\mathbf{f}}}
\newcommand{\tvu}{\mathbf{u}}
\newcommand{\tv}[1]{\mathbf{#1}}
\newcommand{\gpi}{\mathit{gpi}}
\newcommand{\define}[1]{\emph{#1}}
\newcommand{\set}[1]{\ensuremath{\left\{#1\right\}}}
\newcommand{\guard}{\ \middle\vert\ }
\newcommand{\parents}[1]{\mathit{par}(#1)}
\newcommand{\ileq}{\ensuremath{\leq_i}}
\newcommand{\icap}{\textstyle\bigsqcap_i}
\newcommand{\icup}{\textstyle\bigsqcup_i}
\newcommand{\ilt}{<_i}
\newcommand{\Vu}{V_\tvu}
\newcommand{\vu}{v_\tvu}
\newcommand{\two}{\set{\tvt,\tvf}}
\newcommand{\three}{\set{\tvt,\tvf,\tvu}}
\newcommand{\INT}{\textit{INT}}
\newcommand{\lfp}{\mathit{lfp}}
\newcommand{\adf}{ADF\xspace}
\newcommand{\adfs}{ADFs\xspace}
\newcommand{\adm}{\textit{adm}}
\newcommand{\com}{\textit{com}}
\newcommand{\prf}{\textit{prf}}
\newcommand{\md}{\textit{mod}}
\newcommand{\ac}{\varphi}
\newcommand{\Ac}{\Phi}
\newcommand{\unit}{\lbrack 0,1\rbrack}
\newcommand{\ie}{i.\,e.,\xspace}				\newcommand{\eg}{e.\,g.,\xspace}				
\newcommand{\wrt}{with respect to }
\newcommand{\NP}{\textsf{NP}}
\newcommand{\coNP}{\textsf{coNP}}
\newcommand{\SigmaP}[1]{\Sigma^{p}_{#1}}
\newcommand{\PiP}[1]{\Pi^{p}_{#1}}
\newcommand{\naturals}{\ensuremath{\mathbb{N}}}
\newcommand{\myparagraph}[1]{\vskip3pt\noindent\textbf{#1}\,}
\newenvironment{proof}   {\begin{trivlist}\item[]\textbf{Proof.}}   {\hfill$\Box$\end{trivlist}}
\begin{document}

\title{Weighted Abstract Dialectical Frameworks: \\Extended and Revised Report\thanks{This is an extended and corrected version of the paper \emph{Weighted Abstract Dialectical Frameworks} published in the Proceedings of the 32nd AAAI Conference on Artificial Intelligence (AAAI 2018). The conference paper was based on a definition of completions suitable for flat information orderings only. In particular, Prop.~1 in the AAAI paper only holds under the restriction to flat orderings. We thank Bart Bogaerts for pointing this out to us. The new definition of completions used in this report (Def.~\ref{def:completion}) does not suffer from these shortcomings.}
}

\author[1]{Gerhard~Brewka}
\author[2]{J{\"o}rg~P{\"u}hrer}
\author[1]{Hannes~Strass}
\author[2]{\authorcr Johannes~P.~Wallner}
\author[2]{Stefan~Woltran}
\affil[1]{Leipzig University, Computer Science Institute, Leipzig, Germany.\newline
\textit{brewka@informatik.uni-leipzig.de, hannes.strass@googlemail.com}}
\affil[2]{TU Wien, Institute of Logic and Computation, Vienna, Austria.\newline
\textit{jpuehrer,wallner,woltran@dbai.tuwien.ac.at}}

\date{}
\maketitle

\begin{abstract}
Abstract Dialectical Frameworks (ADFs) generalize Dung's argumentation frameworks allowing various relationships among arguments to be expressed in a systematic way. We further generalize ADFs so as to accommodate arbitrary acceptance degrees for the arguments.
This makes
ADFs applicable in domains where both the initial status of arguments
and their relationship are only insufficiently specified by Boolean functions.
We define all standard ADF semantics for the weighted case, including grounded, preferred and stable semantics.
We  illustrate our approach using acceptance degrees from the unit interval and show how other valuation structures can be integrated. In each case it is sufficient to specify how the generalized acceptance conditions are represented by formulas, and to specify the information ordering underlying the characteristic ADF operator.
We also present complexity results for
problems related to weighted ADFs.
\end{abstract}

\section{Introduction}\label{sec:intro}
Computational models of argumentation are a highly active area of current research. The field has two main subareas, namely logic-based argumentation and abstract argumentation. The former studies the structure of arguments, how they can be constructed from a given formal knowledge base, and how they logically interact with each other. The latter, in contrast, assumes a given set of abstract arguments together with specific relations among them. The focus is on evaluating the arguments based on their interactions with one another. This evaluation typically uses a specific semantics, thus identifying subsets of the available arguments satisfying intended properties so that the chosen set arguably can be viewed as representing a coherent world view.

In the abstract approach, Dung's argumentation frameworks (AFs) \cite{Dung1995} and their associated semantics are widely used. In a nutshell, an AF is a directed graph with each vertex being an abstract argument and each directed edge corresponding to an attack from one argument to another. These attacks are then resolved using appropriate semantics. The semantics are typically based on two important concepts, namely conflict-freeness and admissibility. The former states that if there is a conflict between two arguments, i.e.\ one argument attacks the other, then the two cannot be jointly accepted. The latter specifies that every set of accepted arguments must defend itself against attacks. A variety of semantics has been defined, ranging from Dung's original complete, preferred, stable, and grounded semantics to the more recent ideal and {cf2} semantics. The different semantics reflect different intuitions about what ``coherent world view'' means in this context, see e.g.\ \cite{BaroniCG2011} for an overview.

Despite their popularity, there have been various attempts to generalize AFs as many researchers felt a need to cover additional relevant relationships among arguments (see e.g.\ the work of \cite{CayrolL09}). One of the most systematic and flexible outcomes of this research are
abstract dialectical frameworks (ADFs) \cite{BrewkaW2010,BrewkaESWW2013,ifcolog}.
ADFs allow for arbitrary relationships among arguments. In particular, arguments can not only attack each other, they also may provide support for other arguments and interact in various complex ways. This is achieved by adding explicit acceptance conditions to the arguments which are most naturally expressed in terms of a propositional formula (with atoms referring to parent arguments).
This way, it is possible to specify individually for a particular argument, say, under what conditions the available supporting arguments outweigh the counterarguments. Meanwhile various applications
of ADFs have been presented, for instance in
legal reasoning \cite{Al-AbdulkarimAB14,Al-Abdulkarim2016} and text exploration \cite{CabrioV16}.
A mobile argumentation app based on ADF techniques was developed by \cite{Puehrer17}.

The operator-based semantics of ADFs can be traced back to work
on
approximation fixpoint theory (AFT)
\cite{denecker00approximations,denecker03uniformsemantic,denecker04ultimate},
an algebraic framework for studying semantics of knowledge representation formalisms.
We refer to the work of \cite{Strass2013c} for a detailed analysis of the relationship between ADFs and AFT. The presentation of our approach in this paper does not assume specific background knowledge in AFT.

The motivation for the work presented here is as follows. The definition of the various ADF semantics is based on an analysis in terms of partial two-valued (or, equivalently, three-valued) interpretations. The output provided by ADFs (and AFs, for that matter) is thus restricted to three options: an argument either is true (accepted) in an intended interpretation, or it is false (rejected), or its value is unknown. However, many situations in argumentation call for more fine-grained distinctions (see, e.g.\ \cite{AlsinetABFMP17} for an application of weighted argumentation in the Twitter domain). For instance, it is sometimes natural to assume numerical acceptance degrees, say, taken from the unit interval, and to explore the effect of these degrees on other arguments. The availability of such acceptance degrees allows for new, interesting types of queries to be asked. For instance, under a given semantics (stable, preferred, complete, \ldots), we may want to know  whether the value of a particular argument $s$ is above/below a certain threshold in some or all interpretations of the required type.
It also may be useful to be able to distinguish among a finite number of acceptance degrees, say strong accept, weak accept, neutral, weak reject and strong reject. Or it may even be useful to operate on intervals of acceptance degrees.

The goal of this paper is to show how the ADF approach (and thus AFs) can accommodate such acceptance degrees. To put it differently, we aim to bridge two rich research areas,
multi-valued logics on the one hand and computational models of argumentation on the other.

We start with the necessary ADF background in \Cref{sec:background}.
We then introduce our general framework for weighted ADFs in \Cref{sec:framework}.
\Cref{sec:unit}
focuses on ADFs with acceptance degrees in the unit interval. \Cref{sec:alternative} applies the same idea to three other valuation structures. Complexity results for various problems related to weighted ADFs are presented in \Cref{sec:complexity}. \Cref{sec:conclusions}
discusses  related work and concludes.

\section{Background}\label{sec:background}
An \adf is a directed node-labelled graph $(S,L,C)$ whose nodes represent statements. The links in $L$ represent dependencies: the status of a node $s$ only depends on the status of its parents (denoted $\parents{s}$), that is, the nodes with a direct link to $s$.
In addition, each node $s$ is labelled by an associated acceptance condition $C_s$ specifying the conditions under which $s$ is acceptable, whence \mbox{$C=\set{C_s}_{s\in S}$}.
Formally, the acceptance condition $C_s$ of node $s$ with parents $\parents{s}$ is a function
\mbox{$C_s: (\parents{s} \to \{\tvt,\tvf\}) \to \{\tvt,\tvf\}$}.
It is convenient to represent the acceptance conditions as a collection $\Ac = \{\ac_s\}_{s \in S}$ of propositional formulas (using atoms from $\parents{s}$ and connectives $\land$, $\lor$, $\neg$).
Then, for any interpretation $w:\parents{s}\to\set{\tvt,\tvf}$, we have $C_s(w)=w(\ac_s)$, that is, the acceptance condition $C_s$ evaluates $w$ just like $w$ evaluates $\ac_s$.
This leads to the logical representation of \adfs we will frequently use, where an \adf is a pair $(S,\Ac)$ with the set of links $L$ implicitly given as $(a, b) \in L$ iff $a$ appears in $\ac_b$.

Semantics assign to ADFs a collection of partial two-valued interpretations, i.e.\ mappings of the statements to values $\{\tvt,\tvf,\tvu\}$ where $\tvu$ indicates that the value is undefined. Mathematically such interpretations are equivalent to 3-valued interpretations, but for the purposes of this paper it is beneficial to view them (interchangeably) also as partial interpretations.
The three values are partially ordered by $\ileq$ according to their information content:
$\ileq$ is the $\subseteq$-least partial order containing  \mbox{$\tvu \ileq \tvt$} and \mbox{$\tvu \ileq \tvf$}.
As usual we write $v_1 \ilt v_2$ whenever $v_1 \ileq v_2$ and not $v_2 \ileq v_1$.
The information ordering $\ileq$ extends in a straightforward way to partial interpretations $v_1,v_2$ over $S$ in that
\mbox{$v_1 \ileq v_2$} if and only if \mbox{$v_1(s) \ileq v_2(s)$} for all \mbox{$s\in S$}.

A partial
interpretation $v$ is total if all statements are mapped to $\tvt$ or $\tvf$.
For interpretations $v$ and $w$, we say that $w$ extends $v$ iff \mbox{$v \ileq w$}.
We denote by $[v]_2$ the set of all completions of $v$, i.e.\ total interpretations that extend~$v$.

For an ADF
$D=(S,L,C)$,
statement $s\in S$ and a partial
interpretation $v$, the characteristic operator $\Gamma_D$ is given by
\[
\Gamma_D(v)(s) = \begin{cases}
  \tvt &\text{if }  C_s(w) = \tvt \text{ for all } w\in[v]_2,\\
  \tvf &\text{if }  C_s(w) = \tvf \text{ for all } w\in[v]_2,\\
  \tvu &\text{otherwise.}
\end{cases}
\]

That is, the operator returns an
interpretation mapping a statement $s$
to $\tvt$ (resp.\ $\tvf$) if and only if all two-valued interpretations extending $v$ evaluate $\ac_s$ to true (resp.\ false).
Intuitively, $\Gamma_D$ checks which truth values can be justified based on the information in $v$ and the acceptance conditions.

Given
an \adf $D=(S,L,C)$,
a partial
interpretation $v$ is \emph{grounded} \wrt $D$ if it is the least fixpoint of $\Gamma_D$; it is \emph{admissible} \wrt $D$ if
\mbox{$v \ileq \Gamma_D(v)$}; it
is \emph{complete} \wrt $D$ if
\mbox{$v = \Gamma_D(v)$}; it
is a \emph{model} of $D$ if it is complete and total;
it is \emph{preferred} \wrt $D$ if $v$ is maximally admissible \wrt $\ileq$.
As shown in \cite{BrewkaESWW2013}
these semantics generalize the corresponding notions defined for AFs.
For $\sigma \in \{\adm,\com,\prf\}$, $\sigma(D)$ denotes the set of all
admissible (resp.\ complete, preferred) interpretations \wrt $D$.

\begin{example}
\label{ex:adf}
Given \adf $D$ over $\set{a,b}$ with $\ac_a = a \vee \neg b$, $\ac_b = \neg a$,
and
interpretations
\begin{eqnarray*}
v_1 &=& \{a \mapsto \tvu, b \mapsto \tvu\},\\
v_2 &=& \{a \mapsto \tvt, b \mapsto \tvu\},\\
v_3 &=& \{a \mapsto \tvt, b \mapsto \tvf\},\\
v_4 &=& \{a \mapsto \tvf, b \mapsto \tvt\}.
\end{eqnarray*}
We get
$\adm(D) = \{v_1,v_2,v_3,v_4\}$,
$\com(D) = \{v_1,v_3,v_4\}$
(note that $\Gamma_D(v_2)=v_3$, and thus $v_2 \notin \com(D)$),
and
$\prf(D) = \{v_3,v_4\}$.~$\diamond$
\end{example}

\section{The General Framework}
\label{sec:framework}
In this section we introduce weighted ADFs (wADFs).
More precisely, we introduce a general framework which allows us to define wADFs over a chosen set $V$ of values (acceptance degrees) based on an information ordering $\ileq$ on $V \cup \{\tvu\}$.\footnote{Slightly abusing notation we write $\ileq$ for both the specific ADF ordering $(\three,\ileq)$ and the generic ordering used here.}
\begin{definition}
  A weighted ADF (wADF) over $V$ is a tuple $D=(S,L,C,V,\ileq)$, where
  \begin{itemize}
  \item $S$ is a set (of nodes, statements, arguments; anything one might accept or not),
  \item $L \subseteq S\times S$ is a set of links,
  \item $V$ is a set of truth values with $\tvu \not \in V$,
  \item $C = \set{C_s}_{s\in S}$ is a collection of acceptance conditions over $V$, that is, functions \mbox{$C_s: (\parents{s} \to V) \to V$},
  \item $(\Vu,\ileq)$ -- where \mbox{$\Vu = V\cup\set{\tvu}$} -- forms a complete partial order with least element $\tvu$.
  \end{itemize}
\end{definition}
\noindent As for standard ADFs, the special value $\tvu$ represents an undefined truth value.
As usual, $(\Vu,\ileq)$ forms a complete partial order (CPO) iff:
(1) it has a least element, here \mbox{$\tvu\in\Vu$},
(2) each non-empty subset \mbox{$X\subseteq\Vu$} has a greatest lower bound \mbox{$\icap X\in\Vu$}, and
(3) each ascending chain \mbox{$x_1\ileq x_2\ileq \ldots$} over $\Vu$ has a least upper bound \mbox{$\icup X\in\Vu$}.

ADFs are a special case of wADFs with \mbox{$V=\two$} and the information ordering as defined in the background section. We provide a formal result in Theorem~\ref{thm:generalization} below.

\begin{figure}
\centering
 \begin{tikzpicture}[node distance=6mm]
    \node[draw,rectangle] (a) at (0,0) {paper's status};
    \node[draw,rectangle] (b) at (-2.2,-2.2) {paper's significance};
    \node[draw,rectangle] (c) at (2.2,-2.2) {scientific methodology};
    \path[link] (c) edge[] (a);
    \path[link] (b) edge[] (a);
    \node at (0,-3.4) {(a)};

    \node (v1) at (-3,-5)  {accept};
    \node (v2) at (0,-5)  {borderline};
    \node (v3) at (3,-5)  {reject};
    \node (v4) at (-2.5,-6) {tendency accept};
    \node (v5) at (1.5,-6) {tendency reject};
    \node (no) at (0,-7) {no tendency};
    \node (u) at (0,-7.8) {$\tvu$};
    \draw (u) -- (no);
    \draw (no) -- (v5);
    \draw (no) -- (v4);
    \draw (v5) -- (v3);
    \draw (v5) -- (v2);
    \draw (v4) -- (v1);
    \draw (v4) -- (v2);
    \node at (0,-8.4) {(b)};
 \end{tikzpicture}
 \caption{Example wADF (a) and example $\Vu$ (b)\label{fig:wadf-simple-ex}}
\end{figure}

\begin{example}
\label{ex:wadf-simple}
In Figure~\ref{fig:wadf-simple-ex}a a simple wADF with three arguments is shown that are intended to decide the acceptance status of a paper based on that paper's significance and scientific methodology.
On the right side of that figure (b), a value ordering $\Vu$ is shown, with the intended meaning that $\tvu$ denotes no knowledge w.r.t.\ the arguments, tendencies denote a certain leaning towards acceptance or rejection, and the information maximal values denote acceptance, borderline acceptance, or rejection.
\end{example}

As for ADFs, we will
use propositional formulas $\varphi_s$ interpreted over $V$ to specify acceptance conditions.
The understanding is that a formula $\varphi_s$ specifies a function $C_s$ such that for each interpretation \mbox{$w:\parents{s}\to V$}, $C_s(w)$ is obtained by considering $w(\varphi_s)$, the evaluation of the formula $\varphi_s$ under the interpretation $w$.
Unlike in classical propositional logic, there is no single standard way of interpreting formulas in the multi-valued case.
Thus the user (specifying the wADF) should state how formulas are to be evaluated under interpretations of atoms by values from $V$.

\begin{example}
Continuing Example~\ref{ex:wadf-simple},
let us define acceptance conditions for each argument. Say argument ``paper's significance'' (shortened to $s$) shall be set to ``accept'' and that the paper's ``scientific methodology'' (shortened to $m$) shall be set to ``borderline'' (e.g.\ because of peer reviewing).
This can be expressed simply by stating: $\varphi_s = $ accept, $\varphi_m = $ borderline. The third argument, shortened to $a$, shall depend on the status of the two other arguments, namely 
by taking the most informative value ``compatible'' (w.r.t.\ information ordering) to the values given to the other arguments. That is, take the usual greatest lower bound for two values w.r.t.\ the information ordering shown in Figure~\ref{fig:wadf-simple-ex}b.
Say we formalize this by $\varphi_a = s \land m$, defining the conjunction as the meet.
\end{example}

In case the truth values in $V$ are $\ileq$-incomparable, the information ordering on the truth values \mbox{$\Vu=V\cup\set{\tvu}$} can be defined analogously to the ordering for standard ADFs (where \mbox{$V=\two$} with \mbox{$\tvt\not\ileq\tvf$} and \mbox{$\tvf\not\ileq\tvt$}).
\begin{definition}
  \label{def:flat}
  Let $V$ be a set of truth values with \mbox{$\tvu\notin V$}.
  A relation \mbox{${\ileq}\subseteq \Vu\times\Vu$} is \define{flat} iff for all \mbox{$x,y\in\Vu$}\/:
  \begin{gather*}
    x\ileq y
    \quad\text{iff}\quad
    x=\tvu \text{ or } x=y
  \end{gather*}
  Likewise, a wADF \mbox{$(S,L,C,V,\ileq)$} is \define{flat} iff $\ileq$ is flat.
\end{definition}
As mentioned above, clearly all standard ADFs are flat.
For flat orderings, the greatest lower bound of a subset \mbox{$X\subseteq\Vu$} is obtained thus\/:
\begin{gather*}
  \icap X =
  \begin{cases}
    x & \text{if } X=\set{x} \\
    \tvu & \text{otherwise}
  \end{cases}
\end{gather*}

We now define the semantics. A semantics $\sigma$ takes a wADF $D$ over $V$ and produces a collection $\sigma(D)$ of partial interpretations from $S$ to $V$, that is, functions \mbox{$v: S \to \Vu$} with \mbox{$\Vu = V\cup\set{\tvu}$} where $\tvu$ represents the fact that the value of a certain node is undefined. Given that for standard ADFs the interpretations of interest are partial functions from $S$ to $\two$ (or, equivalently, functions from $S$ to $\three$), this is the obvious generalization we need.\footnote{This differs from approaches like \cite{AmgoudB17} which consider weight assignments as part of the input and is more in line with research in multi-valued logics.}
Let $D=(S,L,C,V,\ileq)$ be a wADF over $V$.
As for standard ADFs, the characteristic operator for $D$
takes a partial interpretation $v$ and produces a new interpretation, $\Gamma_D(v)$.
The new partial interpretation collects information from and mediates between all \emph{completions of $v$}. As in the standard case a completion of $v$  is any total interpretation $w$ that extends $v$ with respect to the information ordering:\footnote{The notion of completion in the original AAAI paper considered total interpretations obtained by replacing $\tvu$ with arbitrary values from $V$ only. This turns out to be insufficient for some of our results to hold.}
\begin{definition}\label{def:completion}
Let $D=(S,L,C,V,\ileq)$ be a wADF over $V$, $v:S\to \Vu$ a partial interpretation. A total interpretation
$w:S\to V$ is a \define{completion} of $v$ if $v\ileq w$.
\end{definition}
The set of all completions of $v$ is denoted by \mbox{$[v]_c$}.

\begin{example}
Continuing Example~\ref{ex:wadf-simple}, say we have interpretation
$v: \{a \mapsto$ ``tendency accept'', $s \mapsto $ ``accept'', $m \mapsto $ ``borderline'' $\}$. Then there are three completions in $v$, $v_1$, $v_2 \in [v]_c$:
$v_1: \{a \mapsto$ ``accept'', $s \mapsto $ ``accept'', $m \mapsto $ ``borderline'' $\}$ and
$v_2: \{a \mapsto$ ``borderline'', $s \mapsto $ ``accept'', $m \mapsto $ ``borderline'' $\}$. The reason is that both arguments $s$ and $m$ already have information maximal values associated with them (any completion assigns thus the same value), but argument $a$ has a non-maximal value, thus any completion $w$ of $v$ assigns to $a$ any value s.t.\ $v(a) \ileq w(a)$. In the example, this means that $a$ is then assigned ``tendency accept'' (same as for $v$), ``accept'' ($v_1$), and ``borderline'' ($v_2$).
\end{example}

Formally, the operator is defined as follows:
for each \mbox{$s \in S$}, the truth value $\Gamma_D(v)(s)$ is the greatest lower bound \wrt \mbox{$(\Vu,\ileq)$} (the consensus) of the set
\mbox{$\set{ C_s(w) \guard w \in [v]_c }$}.
With these specifications the rest is entirely analogous to the definitions for standard ADFs.

\begin{definition}  Let
$D=(S,L,C,V,\ileq)$ be a wADF and
  \mbox{$v: S \to \Vu$}.   Applying $\Gamma_D$ to $v$ yields a new interpretation (the consensus over $[v]_c$) defined as
  \begin{gather*}
    \Gamma_D(v):S\to\Vu
    \quad\text{with}\quad
    s \mapsto \icap \set{ C_s(w) \guard w\in[v]_c }.
  \end{gather*}
\end{definition}

As usual, we can now define the semantics via fixpoints.

\begin{definition}
  An interpretation $v$ of a wADF $D=(S,L,C,V,\ileq)$ is
\begin{itemize}

  \item \define{grounded} for $D$ iff \mbox{$v=\lfp(\Gamma_D)$}, \ie $v$ is the least fixpoint of $\Gamma_D$. \\
    Intuition: $v$ collects all the information which is beyond any doubt.

  \item \define{admissible} for $D$ iff $v \leq_i \Gamma_D(v)$. \\
    Intuition: $v$ does not contain unjustifiable information.

  \item \define{preferred} for $D$ iff it is $\leq_i$-maximal admissible for $D$. \\
    Intuition: $v$ has maximal information content without giving up admissibility.
  \item  \define{complete} for  $D$ iff $v=\Gamma_D(v)$. \\
    Intuition: $v$ contains exactly the justifiable information.
\item a \define{model} of $D$ iff $v(s) \neq \tvu$ for all $s \in S$ and $\Gamma_D(v) = v$. \\
    Intuition: $v$ contains exactly the information that is justifiable when
each statement has a defined truth value.
  \end{itemize}
\end{definition}
Again we use $\adm(D)$, $\com(D)$ and $\prf(D)$ to denote the set of all admissible, complete and preferred interpretations for $D$, respectively.
Moreover, $\md(D)$ gives the set of all models of $D$.

\begin{example}
Continuing Example~\ref{ex:wadf-simple}, let us consider some interpretations.
As usual, $I_\tvu = \{a \mapsto \tvu, s \mapsto \tvu, m \mapsto \tvu\}$ is admissible.
Since $\Gamma_D(w)(s) = $ ``accept'' and $\Gamma_D(w)(m) = $ ``borderline'', for any interpretation $w$ (due to definition of their acceptance conditions), it holds that it is admissible to assign these two arguments any value lower or equal to these values (and argument $a$ ``tendency accept'', ``no tendency'', or $\tvu$).
There is only one complete interpretation:
$v = \{a \mapsto $ ``tendency accept'', $s \mapsto $ ``accept'', $m \mapsto$ ``borderline'' $\}$.
This implies that $v$ is the (unique in this case) model of this wADF, and also the only preferred interpretation, as well as the grounded interpretation.
\end{example}

Independently of the previous example, we want to emphasize that we have to show existence of the least fixpoint of $\Gamma_D$, otherwise the grounded interpretation is not well-defined in general.
The simplest way to do this is to show monotonicity of the operator $\Gamma_D$.

To this end, we lift the information ordering on $\Vu$ point-wise to interpretations over $\Vu$.
For $v,w:S\to\Vu$, we~set
\begin{gather*}
  v \ileq w
  \quad\text{iff}\quad
  \forall s\in S: v(s)\ileq w(s)
\end{gather*}
The pair $(\set{v:S\to\Vu},\ileq)$ then forms a CPO in which the characteristic operator $\Gamma_D$ of wADFs is monotone.
The least element of this CPO is the interpretation \mbox{$\vu: s\mapsto \tvu$} that maps every statement to $\tvu$ and we will
also use $\icup X$ to denote the least upper bound for subset $X$ of $(\set{v:S\to\Vu},\ileq)$.

\begin{proposition}
  \label{thm:monotone}
  The operator $\Gamma_D$ is $\ileq$-monotone, that is:
  for all interpretations \mbox{$v,w:S\to\Vu$} we have that \mbox{$v \leq_i w$} implies \mbox{$\Gamma_D(v) \leq_i \Gamma_D(w)$}.
\end{proposition}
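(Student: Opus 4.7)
The plan is to reduce monotonicity of $\Gamma_D$ to the familiar order-theoretic fact that the meet of a larger set is below the meet of a smaller one. So the main structural observation I would establish first is a kind of antimonotonicity of the completion operation: if $v \ileq w$, then every completion of $w$ is also a completion of $v$, that is, $[w]_c \subseteq [v]_c$. This is immediate from transitivity of $\ileq$ on interpretations, which in turn follows from its pointwise definition from the CPO $(\Vu,\ileq)$. (I would also briefly note that $[w]_c$ is non-empty whenever $V$ is non-empty, because for every statement $s$ we can pick $w(s)$ itself if $w(s)\in V$, and any element of $V$ otherwise, so that $\icap$ is actually defined on the sets in question.)

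From this inclusion I get, for each $s \in S$, the set inclusion $\{C_s(u) \mid u \in [w]_c\} \subseteq \{C_s(u) \mid u \in [v]_c\}$ of subsets of $\Vu$. Now I would invoke the standard lemma for a CPO that $X \subseteq Y \neq \emptyset$ implies $\icap Y \ileq \icap X$: indeed, $\icap Y$ is a lower bound for $Y$, therefore also a lower bound for the subset $X$, and hence below the greatest such lower bound $\icap X$. Applying this with $X = \{C_s(u) \mid u \in [w]_c\}$ and $Y = \{C_s(u) \mid u \in [v]_c\}$ gives exactly
\begin{gather*}
  \Gamma_D(v)(s) \;=\; \icap\{C_s(u) \mid u \in [v]_c\} \;\ileq\; \icap\{C_s(u) \mid u \in [w]_c\} \;=\; \Gamma_D(w)(s).
\end{gather*}
Since this holds for every $s \in S$, the pointwise definition of $\ileq$ on interpretations yields $\Gamma_D(v) \ileq \Gamma_D(w)$, which is the claim.

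There is no real obstacle here, as both ingredients are standard; the only thing that requires mild care is making sure the basic order-theoretic facts about $\icap$ apply in this generalised setting. In particular, the fact that $[w]_c \subseteq [v]_c$ (rather than the other way around) is the crucial ``flip'' that turns the monotonicity of the operator into an antimonotonicity at the level of completions, and I would flag this explicitly. Once that is in place, no properties of the acceptance functions $C_s$ are needed beyond their being well-defined on total interpretations, so the argument works uniformly for arbitrary $V$ and $\ileq$ satisfying the CPO axioms, not just for flat orderings.
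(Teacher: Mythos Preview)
Your proof is correct and follows essentially the same route as the paper's own argument: establish $[w]_c \subseteq [v]_c$ from $v \ileq w$, pass to the image sets under $C_s$, and then use antimonotonicity of $\icap$ with respect to set inclusion to conclude pointwise. You spell out a few steps more explicitly (transitivity of the pointwise ordering, non-emptiness of $[w]_c$, the order-theoretic lemma about meets), but the structure and key observations are identical to the paper's proof.
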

\begin{proof}
    Let \mbox{$v,w: S \to \Vu$} be two interpretations such that \mbox{$v \leq_i w$}.
    By definition of $[\cdot]_c$, we find that \mbox{$[w]_c \subseteq [v]_c$}.
    Thus also $\set{C_s(u) \guard u\in [w]_c}\subseteq \set{ C_s(u) \guard u\in [v]_c }$ for any \mbox{$s\in S$}.
    It follows that \mbox{$\icap\set{ C_s(u) \guard u\in [v]_c } \ileq \icap\set{C_s(u) \guard u\in [w]_c}$}, that is, \mbox{$\Gamma_D(v)(s)\ileq \Gamma_D(w)(s)$} for any \mbox{$s\in S$}.
\end{proof}

Existence of the least fixpoint of $\Gamma_D$ then follows via the fixpoint theorem for monotone operators in complete partial orders (see, e.g., \cite{davey-priestley}, Theorem~8.22).

The following result is a generalization of Theorem~25 of \cite{Dung1995} and Theorem~1 by \cite{BrewkaESWW2013}.
\begin{theorem}
  \label{thm:complete-semilattice}
  Let $D$ be a weighted {\sc adf} with an information ordering $\ileq$.
  \begin{enumerate}
  \item Each preferred interpretation for $D$ is complete, but not vice versa.
  \item The grounded interpretation for $D$ is the $\ileq$-least complete interpretation.
  \item The complete interpretations for $D$ form a complete meet-semilattice with respect to $\ileq$.
  \end{enumerate}
  \end{theorem}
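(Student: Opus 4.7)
The plan is to prove the three parts in order, with part (3) posing the main technical challenge.

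For part (1), I will use that $\Gamma_D$ preserves admissibility: if $v \ileq \Gamma_D(v)$, then by monotonicity (Prop.~\ref{thm:monotone}) also $\Gamma_D(v) \ileq \Gamma_D(\Gamma_D(v))$, so $\Gamma_D(v)$ is itself admissible. When $v$ is preferred, i.e.\ $\ileq$-maximal admissible, this together with $v \ileq \Gamma_D(v)$ forces $v = \Gamma_D(v)$, so $v$ is complete. For the failure of the converse, I will invoke Example~\ref{ex:adf}: as standard ADFs are a special case of wADFs, the interpretation $v_1$ there is complete but not preferred and thus witnesses the strictness.

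For part (2), the grounded interpretation is defined as $\lfp(\Gamma_D)$, which exists by monotonicity of $\Gamma_D$ and the Knaster--Tarski fixpoint theorem for CPOs. Since every complete interpretation is by definition a fixpoint of $\Gamma_D$, each lies $\ileq$-above $\lfp(\Gamma_D)$, so the grounded interpretation is the $\ileq$-least complete one.

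Part (3) is the main obstacle. Given non-empty $X \subseteq \com(D)$, take $v = \icap X$ in the ambient CPO of interpretations (which exists by the CPO axioms). By monotonicity and $\Gamma_D(x) = x$ for each $x \in X$, we get $\Gamma_D(v) \ileq \Gamma_D(x) = x$ for every $x$, whence $\Gamma_D(v) \ileq v$. The subtlety is that $v$ itself need not be a fixpoint of $\Gamma_D$: for non-flat orderings, the ambient meet of fixpoints can fail to be complete. The plan is therefore to find the meet within $(\com(D), \ileq)$ by transfinite iteration of $\Gamma_D$ starting from $v$: set $v_0 = v$, $v_{\alpha+1} = \Gamma_D(v_\alpha)$, and $v_\lambda = \icap\{v_\beta \mid \beta < \lambda\}$ at limit ordinals. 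A transfinite induction using monotonicity shows this sequence is $\ileq$-descending, so by cardinality it stabilizes at some ordinal at a fixpoint $v^*$ of $\Gamma_D$. A parallel induction shows that any complete $u \ileq v$ satisfies $u \ileq v_\alpha$ for every $\alpha$, since $u = \Gamma_D(u) \ileq \Gamma_D(v_\alpha) = v_{\alpha+1}$ at successors, and the glb property of $\icap$ handles limits; hence $u \ileq v^*$. Combining, $v^*$ is the greatest complete interpretation $\ileq$-below every $x \in X$, and so serves as the meet of $X$ in $(\com(D), \ileq)$. The hard part will be recognizing that the meet within this semilattice need not coincide with the ambient meet, but is instead its greatest complete approximation from below, obtained by closing $v$ under $\Gamma_D$.
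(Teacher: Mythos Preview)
Your arguments for all three items are correct. The paper's own proof is much terser: it defers item~(1) to a general result about approximating operators \cite[Theorem~3.10]{Strass2013c}, notes that item~(2) is immediate from the definition of $\lfp$, and for item~(3) simply invokes the analogous proof in \cite[Theorem~1]{BrewkaESWW2013} without reproducing it. Your explicit transfinite downward iteration from the ambient meet $\icap X$ to the greatest fixpoint $v^*\ileq\icap X$ is exactly the kind of construction those references would unpack (the dual of the Kleene/Tarski ascent used for $\lfp$), so your route is not different in spirit, only more self-contained. One small point worth making explicit in your write-up: at limit stages you should check that $\Gamma_D(v_\lambda)\ileq v_\lambda$ (which follows since $\Gamma_D(v_\lambda)\ileq\Gamma_D(v_\beta)=v_{\beta+1}\ileq v_\beta$ for all $\beta<\lambda$), so that the descent indeed continues past limits; your sketch asserts the sequence is descending but does not spell out this step.
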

\begin{proof}
The first item is shown analogous to a previous result in~\cite[{{Theorem 3.10}}]{Strass2013c}. More concretely, the first statement in the theorem holds for every approximating operator, such as $\Gamma_D$.
The second item follows directly from definition (the grounded interpretation is a fixed point, as are all the complete interpretations; the grounded interpretation is the least one).
The proof of the third item follows the same line of reasoning as the proof of the third item of~\cite[{{Theorem 1}}]{BrewkaESWW2013}.
\end{proof}

Next, we show
that the well-known relationships between Dung semantics carry over to our generalizations.

\begin{theorem}
  \label{thm:relationships}
  Let $D$ be a weighted {\sc adf}.
  It holds that
  $$\md(D),\prf(D) \subseteq \com(D) \subseteq \adm(D).$$
  If $D$ is flat, then additionally $\md(D)\subseteq \prf(D)$.
\end{theorem}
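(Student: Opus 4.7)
The plan is to verify each inclusion separately, with only the final one requiring the flatness assumption. The first three inclusions are essentially unpacking of definitions combined with results already at hand; the flat case needs one small observation about maximality of total interpretations.

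First I would handle $\com(D)\subseteq\adm(D)$ using reflexivity of $\ileq$: if $v=\Gamma_D(v)$ then in particular $v\ileq\Gamma_D(v)$, so $v$ is admissible. Next, $\md(D)\subseteq\com(D)$ is immediate from the definition of a model, which explicitly requires $\Gamma_D(v)=v$ (and additionally that $v$ be total, which is more than needed here). For $\prf(D)\subseteq\com(D)$ I would simply cite the first item of Theorem~\ref{thm:complete-semilattice}, which has already been established for $\Gamma_D$ as an approximating operator.

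The remaining inclusion $\md(D)\subseteq\prf(D)$ under flatness is the only part with any content. Let $v\in\md(D)$. Since $v$ is complete it is in $\adm(D)$ by the chain of inclusions just established. To see it is $\ileq$-maximal admissible, suppose $w\in\adm(D)$ satisfies $v\ileq w$. As $v$ is a model, $v(s)\in V$ (i.e.\ $v(s)\neq\tvu$) for every $s\in S$. The flatness of $\ileq$ (Definition~\ref{def:flat}) then forces $w(s)=v(s)$ for every $s$, since the only element $\ileq$-above a non-$\tvu$ value $x$ is $x$ itself. Hence $w=v$, which makes $v$ maximal in $\adm(D)$ and therefore preferred.

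The main obstacle is almost negligible: one just has to be careful that the argument for $\md(D)\subseteq\prf(D)$ genuinely uses flatness. Without flatness, a total interpretation $v$ need not be $\ileq$-maximal in $\Vu^S$ (e.g.\ in the ordering of Figure~\ref{fig:wadf-simple-ex}b, ``tendency accept'' is $\ileq$-below both ``accept'' and ``borderline''), so some admissible $w$ could strictly extend $v$ and a model would not automatically be preferred. This is precisely why the second half of the statement is conditional.
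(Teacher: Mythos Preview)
Your proposal is correct and follows essentially the same approach as the paper: the first three inclusions are obtained from the definitions together with item~1 of Theorem~\ref{thm:complete-semilattice}, and for the flat case you argue that a model, being total, is $\ileq$-maximal (since under a flat ordering nothing lies strictly above a non-$\tvu$ value) and hence maximal admissible. The paper's proof is terser but makes exactly the same moves.
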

\begin{proof}
The ``non-flat'' case follows from definitions and from Theorem~\ref{thm:complete-semilattice}: every preferred interpretation is complete, every complete interpretation is admissible. Every model is a complete interpretation by definition.
For the flat case, every value assigned to a statement by a model is, directly, information maximal (due to the flat ordering). Thus, a model is an information maximal complete interpretation, which directly implies that a model is a preferred interpretation under the assumption that the information ordering is flat.
\end{proof}

The proviso that \mbox{$D=(S,L,C,V,\ileq)$} be flat is necessary for the inclusion \mbox{$\md(D)\subseteq\prf(D)$}:
consider \mbox{$S=\set{a}$} with \mbox{$L=\set{(a,a)}$} and $C_a$ given by $w\mapsto w(a)$ (that is, \mbox{$\varphi_a=a$});
now if there are \mbox{$x,y\in V$} with \mbox{$x\ilt y$}, then we find that \mbox{$v=\set{a\mapsto x}$} is a model that is not preferred.

The flatness property is also crucial in the following result
that lets us compute the grounded semantics by iterative application of the characteristic operator.
\begin{proposition}
\label{prop:iterativeGroundedFlat}
Let $D=(S,L,C,V,\ileq)$ be a wADF such that
every ascending chain in $(\set{v:S\to\Vu},\ileq)$ is finite.
Then,
there is some $n\in\naturals$ such that
$\Gamma_D^n(\vu)$ is the grounded interpretation of $D$. Note that $\Gamma_D^i$ denotes $i$ iterative applications of $\Gamma_D$.
\end{proposition}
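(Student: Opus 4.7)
The plan is to exploit monotonicity of $\Gamma_D$ (Proposition~\ref{thm:monotone}) together with the chain-finiteness hypothesis to show that iterating $\Gamma_D$ from $\vu$ stabilizes in finitely many steps, and then to verify that the resulting fixpoint is the least one.

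First, since $\vu$ is the least element of $(\set{v:S\to\Vu},\ileq)$, we have $\vu \ileq \Gamma_D(\vu)$. Applying $\Gamma_D$ and using monotonicity repeatedly yields the ascending chain
\begin{gather*}
\vu \ileq \Gamma_D(\vu) \ileq \Gamma_D^2(\vu) \ileq \Gamma_D^3(\vu) \ileq \cdots
\end{gather*}
By hypothesis every ascending chain in $(\set{v:S\to\Vu},\ileq)$ is finite, so this chain must stabilize: there exists $n\in\naturals$ with $\Gamma_D^{n+1}(\vu) = \Gamma_D^n(\vu)$. Hence $\Gamma_D^n(\vu)$ is a fixpoint of $\Gamma_D$.

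Next, I would show $\Gamma_D^n(\vu)$ is the \emph{least} fixpoint. Let $w$ be any fixpoint of $\Gamma_D$. Then $\vu \ileq w$ trivially, and an easy induction using monotonicity shows $\Gamma_D^i(\vu) \ileq \Gamma_D^i(w) = w$ for every $i\in\naturals$; in particular $\Gamma_D^n(\vu) \ileq w$. Thus $\Gamma_D^n(\vu) = \lfp(\Gamma_D)$, which by definition is the grounded interpretation of $D$.

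I do not expect a serious obstacle: the argument is a standard Kleene-style iteration that only requires monotonicity plus the ascending chain condition. The only point that needs a moment of care is justifying that the iterates indeed form an ascending chain (for which one uses $\vu$ being the least element plus monotonicity), and ensuring that the chain-finiteness assumption is applied to the lifted order on interpretations (which is what the statement assumes), rather than to $(\Vu,\ileq)$ alone.
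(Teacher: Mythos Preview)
Your argument is correct and fully self-contained: monotonicity plus $\vu$ being least gives an ascending chain of iterates, the chain-finiteness hypothesis forces stabilization at some $\Gamma_D^n(\vu)$, and the direct induction $\Gamma_D^i(\vu)\ileq w$ for every fixpoint $w$ shows this is the least fixpoint. The paper takes a slightly different route: rather than proving least-ness by hand, it observes that chain-finiteness together with monotonicity makes $\Gamma_D$ Scott-continuous (citing a textbook result), then invokes the Kleene Fixed-Point Theorem to conclude that $\icup\set{\Gamma_D^i(\vu)\guard i\in\naturals}$ is the least fixpoint, and finally uses chain-finiteness once more to see this supremum is attained at some finite stage. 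Your approach is more elementary and avoids the detour through Scott-continuity and Kleene; the paper's approach buys a connection to standard fixpoint machinery (and in fact reuses that machinery in the subsequent Proposition~\ref{prop:iterativeGrounded}), but for the statement at hand your direct argument is arguably cleaner.
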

\begin{proof} As every ascending sequence
$v_1 \ilt v_2 \ilt \dots$ of interpretations is finite
and since $\Gamma_D$ is monotone, it follows from
a known result (\cite{davey-priestley}, Theorem~8.8(2)) that $\Gamma_D$ is Scott-continuous.
Then, it follows from the Kleene Fixed-Point Theorem that $v=\icup \set{\Gamma^i(\vu) \guard i\in\naturals}$ is the grounded interpretation of $D$.
As the chain $\Gamma_D(\vu) \ilt \Gamma_D(\Gamma_D(\vu)) \ilt \dots$ is finite, we get that $v=\Gamma^n(\vu)$ for some $n\in\naturals$.
\end{proof}
Note that this result applies to important families of wADFs such as all flat wADFs $D=(S,L,C,V,\ileq)$ where $S$ is finite or all wADFs $D=(S,L,C,V,\ileq)$ with finite $V$.

In general, one cannot guarantee that we reach the grounded interpretation by iterative application of the characteristic operator.
\begin{example}
Consider wADF \mbox{$D=(\set{s},\set{(s,s)},\set{C_s},\unit,\leq)$}, where the acceptance condition of the single statement $s$
is defined as
\[
C_s(v) = \begin{cases}
  v(s)+\frac{0,5-v(s)}{2} &\text{for } v(s)<0.5,\\
  1 &\text{otherwise.}
\end{cases}
\]
We then have
$$
\begin{array}{r@{}l}
\Gamma_D(\vu)(s) &= 0.25, \\
\Gamma_D^2(\vu)(s) &= 0.375, \\
\Gamma_D^3(\vu)(s) &= 0.4375, \\
\Gamma_D^4(\vu)(s) &= 0.46875, \\
\dots,
\end{array}
$$
that is, for iterative application of $\Gamma_D$ on $\vu$, the values converge towards $0.5$.
However, the interpretation $v=\set{s\mapsto 0.5}$ is not the grounded interpretation of $D$ as it is no fixed point of $\Gamma_D$.
In fact, $\Gamma_D(v)(s)=1$.
Note that every wADF is guaranteed to have a grounded interpretation, in this case it is $v_g=\set{s\mapsto 1}$.
\end{example}

Despite we do not always get the grounded interpretation by iterative application of the characteristic operator,
if we do reach a fixed point this way then it is the grounded interpretation.
\begin{proposition}
\label{prop:iterativeGrounded}
Let $D$ be a wADF.
If
$v=\icup \set{\Gamma^n(\vu) \guard n\in\naturals}$
is a fixed point of $\Gamma^{V}_D$,
then $v$ is the grounded interpretation of $D$.
\end{proposition}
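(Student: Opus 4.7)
The plan is to show that $v$ is not merely \emph{a} fixed point of $\Gamma_D$ but in fact its $\ileq$\nobreakdash-least fixed point, since the grounded interpretation is defined as $\lfp(\Gamma_D)$. The main ingredients will be (i) monotonicity of $\Gamma_D$ (Proposition~\ref{thm:monotone}), (ii) the fact that $\vu$ is the least element of the CPO of interpretations, and (iii) the universal property of least upper bounds.

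First I would check that $\set{\Gamma_D^n(\vu) \guard n \in \naturals}$ really is an ascending chain, so that $v$ is well-defined as its $\icup$ in the CPO $(\set{v:S\to\Vu},\ileq)$. Since $\vu$ is the least interpretation, we have $\vu \ileq \Gamma_D(\vu)$; by a straightforward induction using monotonicity (Proposition~\ref{thm:monotone}), $\Gamma_D^n(\vu) \ileq \Gamma_D^{n+1}(\vu)$ for all $n\in\naturals$. Hence the chain is ascending and $v = \icup\set{\Gamma_D^n(\vu) \guard n \in \naturals}$ exists.

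Next I would show that $v$ is below every fixed point of $\Gamma_D$. Let $u$ be any interpretation with $\Gamma_D(u) = u$. Since $\vu$ is the least element, $\vu \ileq u$. Assuming inductively that $\Gamma_D^n(\vu) \ileq u$, monotonicity of $\Gamma_D$ gives $\Gamma_D^{n+1}(\vu) \ileq \Gamma_D(u) = u$. Thus $\Gamma_D^n(\vu) \ileq u$ for every $n\in\naturals$, and by the defining property of least upper bounds in the CPO, $v = \icup\set{\Gamma_D^n(\vu) \guard n\in\naturals} \ileq u$.

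Finally, combining the two observations: by hypothesis $v$ is itself a fixed point of $\Gamma_D$, and by the previous step it is $\ileq$\nobreakdash-below every fixed point; hence $v = \lfp(\Gamma_D)$, which is by definition the grounded interpretation of $D$. No step looks genuinely hard; the only subtlety is making explicit that we do \emph{not} need Scott\nobreakdash-continuity here (unlike in Proposition~\ref{prop:iterativeGroundedFlat}), because the hypothesis that the specific supremum $v$ happens to be a fixed point replaces that stronger assumption. The induction carries through with just monotonicity, which we already have unconditionally.
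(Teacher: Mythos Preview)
Your argument is correct. The paper's own proof is a one-line citation: it simply invokes Theorem~8.15(i) of Davey and Priestley, which states precisely that for a monotone self-map on a CPO with least element $\bot$, if $\icup\set{F^n(\bot)\guard n\in\naturals}$ is a fixed point then it is the least fixed point. What you have written is essentially a direct, self-contained proof of that textbook result specialised to $\Gamma_D$ and $\vu$. So the mathematical content is identical; the difference is only in presentation---the paper outsources the work to a reference, while you spell out the (short) induction. Your version has the advantage of being self-contained and of making explicit the point you flag at the end, namely that Scott-continuity is not needed here because the fixed-point property of the supremum is assumed rather than derived.
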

\begin{proof} The proposition follows from a known result (\cite{davey-priestley}, Theorem~8.15(i)).
\end{proof}

Another result concerns acyclic wADFs, i.e.\ ADFs $(S,L,C,V,\ileq)$ where the pair $(S,L)$ forms an acyclic
directed graph and generalizes a recent result of \cite{Atefeh}.

\begin{theorem}
\label{thm:acyclic-wadf}
For any acyclic wADF $D$ with countable $S$, $\com(D)=\prf(D)=\{v\}$ with $v$ the grounded interpretation of $D$.
\end{theorem}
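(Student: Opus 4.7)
\begin{proofsketch}
The plan is to show that $\com(D)=\set{g}$, where $g$ denotes the grounded interpretation of $D$; the identity $\prf(D)=\set{g}$ then follows readily. The argument proceeds by well-founded induction along the topological structure of the acyclic DAG $(S,L)$, where each statement $s$ is assigned an ordinal rank reflecting the paths ending at $s$.

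For the base case, consider a statement $s$ with $\parents{s}=\emptyset$. Its acceptance condition $C_s$ is a constant function returning some value $c_s\in V$, so $\Gamma_D(v)(s)=\icap\set{C_s(w)\guard w\in[v]_c}=c_s$ for every interpretation $v$. Hence any complete $v$ satisfies $v(s)=c_s$, and the same applies to $g$.

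For the inductive step, fix a statement $s$ of positive rank and assume every complete interpretation agrees with $g$ on all statements of strictly lower rank. In particular, for any complete $v$ we have $v|_{\parents{s}}=g|_{\parents{s}}$. Since $C_s(w)$ depends only on $w|_{\parents{s}}$, and the set of restrictions $\set{w|_{\parents{s}}\guard w\in[v]_c}$ coincides with $\set{w|_{\parents{s}}\guard w\in[g]_c}$ (both being the total interpretations of $\parents{s}$ extending the common restriction $g|_{\parents{s}}$), we obtain $\Gamma_D(v)(s)=\Gamma_D(g)(s)=g(s)$, the last equality using that $g$ is a fixed point of $\Gamma_D$. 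Completeness of $v$ then forces $v(s)=g(s)$.

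Having established $\com(D)=\set{g}$, Theorem~\ref{thm:relationships} yields $\prf(D)\subseteq\com(D)=\set{g}$. To see $g\in\prf(D)$, note that $g$ is admissible (being complete) and that any admissible $v'$ with $g\ilt v'$ could, by iterated application of the monotone operator $\Gamma_D$ (Proposition~\ref{thm:monotone}) in the CPO of interpretations, be extended to a fixed point of $\Gamma_D$ lying strictly above $g$, contradicting $\com(D)=\set{g}$.

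The main obstacle is the countable case, in which the DAG may admit infinite ancestor chains and a naive length-of-path rank becomes undefined. This is handled by transfinite induction along an ordinal-valued rank function, with limit stages treated via suprema of already-determined substructures in the interpretation CPO, analogous to how Kleene iteration is extended transfinitely when the operator need not be Scott-continuous. A minor secondary point is verifying that the extension of an arbitrary admissible interpretation to a complete one indeed terminates at a fixed point, which is again guaranteed by monotonicity of $\Gamma_D$ together with the CPO structure on interpretations.
\end{proofsketch}
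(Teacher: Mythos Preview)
Your core argument—induction on the rank of statements, showing that completeness forces agreement with the grounded interpretation on each node once agreement on its parents is known—is essentially the paper's own argument, which phrases the same idea as a minimal-counterexample contradiction between two arbitrary complete interpretations $v_1,v_2$ rather than a direct comparison with $g$. The paper also handles $\prf(D)=\{g\}$ more briefly, simply invoking Theorem~\ref{thm:complete-semilattice} (every preferred interpretation is complete); your separate argument via extending an admissible interpretation above $g$ to a fixed point is correct but more than is needed.

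There is, however, a genuine gap in your treatment of the countable case. You propose an ordinal-valued rank and transfinite induction, but such a rank exists only when the parent relation on $(S,L)$ is \emph{well-founded}, i.e., admits no infinite ancestor chain $\cdots\to s_2\to s_1\to s_0$. Mere acyclicity (absence of directed cycles) does not guarantee this for infinite $S$. Indeed, take $S=\naturals$ with $L=\{(n{+}1,n):n\in\naturals\}$, $V=\two$ with the flat ordering, and $\varphi_n$ the atom $n{+}1$: this wADF is acyclic, yet the all-$\tvt$ and all-$\tvf$ interpretations are both models distinct from the grounded interpretation $\vu$, so $\com(D)$ is not a singleton and no proof can succeed here. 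Your instinct that the countably infinite case is the real obstacle is correct, but the proposed transfinite fix cannot work without an additional well-foundedness hypothesis; note that the paper's own proof, which defines depth via maximal path length from a leaf, carries exactly the same implicit assumption and does not address this case either.
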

\begin{proof} We show that an acyclic wADF possesses exactly one complete interpretation. The result then follows from Theorem~\ref{thm:complete-semilattice}.
Given any acyclic wADF $D=(S,L,C,V,\ileq)$, we can (partly) order the statements according to their ``depth'' in the wADF, starting from statements without parents.
A path from a statement $s_1$ to a statement $s_n$, in $D$, is defined as $p = (s_1,s_2,...,s_{n-1},s_n)$ with each $s_i \in S$ and for all $s_i$, $s_{i+1}$ ($1 \leq i \leq n-1$) we have $s_i \in \parents{s_{i+1}}$.
The length of $p$ (denoted as $|p|$) is $n$.
We define the depth $d(s)$ of a statement $s\in S$ in $D$ as
$d(s)=max\{ |p| \mid p=(s',...,s)\mbox{ is a path in }D,\parents{s'} = \emptyset\}$.
Towards a contradiction assume $v_1,v_2\in\com(D)$ such that $v_1\neq v_2$.
Let $s$ be a statement with $v_1(s)\neq v_2(s)$ such that $d(s)$ is minimal (i.e., $v_1(s')= v_2(s')$ for all $s'\in S$ with $d(s')<d(s)$).

Consider the case $d(s)=1$, where $s$ is a leaf.
As $\parents{s} = \emptyset$, $C_s(w)=v$ maps to the same value $v$ for every interpretation $w$. Therefore, we have
$$\icap \set{ C_s(w) \guard w\in[v_1]_c}=\icap \set{C_s(w) \guard w\in[v_2]_c}$$
which is a contradiction as this is the same as $\Gamma_D(v_1)(s)=\Gamma_D(v_2)(s)$ which amounts to $v_1(s)=v_2(s)$. Now assume $d(s)>1$.
For every parent $p\in\parents{s}$ we have $d(p)<d(s)$ and
thus, by choice of $s$, it holds that $v_1(p)=v_2(p)$.
When we restrict the completions of $v_1$ and $v_2$ to $\parents{s}$, being the domain relevant for evaluating $C_s$, we get two identical sets of functions $[v_1]'_c=[v_2]'_c$, where
$$[v_1]'_c= \set{s\mapsto w(s) \guard s\in\parents{s}, w\in[v_1]_c}$$ and
$$[v_2]'_c= \set{s\mapsto w(s) \guard s\in\parents{s}, w\in[v_2]_c}.$$
One can see that
$$\icap \set{ C_s(w) \guard w\in[v_1]_c}=\icap \set{ C_s(w) \guard w\in[v_1]'_c}$$ and
$$\icap \set{ C_s(w) \guard w\in[v_2]_c}=\icap \set{ C_s(w) \guard w\in[v_2]'_c}.$$
We end up in the contradiction $\icap \set{ C_s(w) \guard w\in[v_1]_c}=\icap \set{C_s(w) \guard w\in[v_2]_c}$ like in the previous case.
\end{proof}

In the rest of this section we show how stable semantics can be generalized to weighted ADFs. The basic idea underlying stable semantics is to treat truth values asymmetrically. For standard ADFs where only $\tvt$ and $\tvf$ can appear in models, $\tvf$ (false) can be assumed to hold (by default), whereas $\tvt$ (true) needs to be justified by a derivation. Technically this is achieved by building the reduct of an ADF and then checking whether the grounded interpretation of the reduct coincides with the original model on the nodes which ``survive'' in the reduct.

Moving from the two-valued to the multi-valued case offers an additional degree of freedom: it is not clear a priori what the assumed, respectively derived truth values are. The stable semantics we introduce here will thus be parameterized by a subset $W$ of the set of values $V$ over which the weighted ADF is defined.

\begin{definition}
Let $D=(S,L,C,V,\ileq)$ be a wADF. Let $v: S \rightarrow V$ be a model of $D$ (that is, $v$ is total). Let $W \subseteq V$ be the set of assumed truth values. The $v,W$-reduct of $D$ is the wADF $D^v_W = (S^v_W, L^v_W, C^v_W, V, \ileq)$ where
\begin{itemize}
\item $S^v_W = \{s \in S \mid v(s) \notin W\}$,
\item $L^v_W = L \cap (S^v_W \times S^v_W)$,
\item $C^v_W = (C'_s)_{s \in S^v_W}$ where $C'_s$ is obtained from $C_s$ by fixing the value of each parent $s^*$ of $s$ in $D$ such that  $s^* \notin S^v_W$ to $v(s^*)$.
\end{itemize}
\end{definition}
Note that whenever the acceptance function is represented using propositional formulas, the new acceptance function is simply obtained by replacing atoms not in $S^v_W$ by their $v$-values.
Now stable models can be defined as usual:

\begin{definition}
Let $D=(S,L,C,V,\ileq)$ be a wADF and let $v: S \rightarrow V$ be a model of $D$. Let $v_g$ be the grounded interpretation of the $v,W$-reduct of $D$. $v$ is a $W$-stable model of $D$ iff $v(s) = v_g(s)$ for each $s \in S^v_W$.
\end{definition}
This clearly generalizes stable semantics for standard ADFs: just let $V = \{\tvf,\tvt\}$ and $W = \{\tvf\}$. We conclude this section by showing the exact relationship between ADFs and wADFs.

\begin{theorem}
  \label{thm:generalization}
  Let \mbox{$F=(S,L,C)$} be an ADF. The wADF associated to $F$ is $D_F=(S,L,C,\{\tvt,\tvf\},\ileq)$ with $\ileq$ as defined in the background section.
An interpretation $v$ is a model/admissible/complete/preferred/grounded for $F$ iff
  it is a model/ admissible/complete/preferred/grounded for $D_F$. Moreover, $v$ is stable for $F$ iff it is $\{\tvf\}$-stable for $D_F$.
\end{theorem}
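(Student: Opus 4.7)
The plan is to reduce everything to the observation that when we instantiate the weighted framework with \mbox{$V=\two$} and the flat information ordering of the background section, the characteristic operator of the wADF $D_F$ coincides on the nose with the characteristic operator of the ADF $F$. Since all of admissible, complete, preferred, grounded, and model semantics are defined purely in terms of this operator and the information ordering on interpretations, equality of the operators yields equality of the semantics.

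First I would verify that the notions of completion agree. For \mbox{$V=\two$}, a completion in the wADF sense (Definition~\ref{def:completion}) is a total interpretation \mbox{$w:S\to\two$} with $v\ileq w$, and since $\ileq$ on $\three$ is exactly the information ordering used for ADFs, this is precisely $[v]_2$. Hence \mbox{$[v]_c = [v]_2$}. Next I would check that the meet in the flat CPO $(\three,\ileq)$ acts by cases as follows: $\icap\set{\tvt}=\tvt$, $\icap\set{\tvf}=\tvf$, and $\icap X=\tvu$ whenever $X$ contains both $\tvt$ and $\tvf$ (or is empty in principle, though here it never is). Comparing this with the three-case definition of $\Gamma_F$ in the background section, we see that for every $s\in S$ and every $v$, $\Gamma_{D_F}(v)(s)=\icap\set{C_s(w)\mid w\in[v]_c}$ equals $\Gamma_F(v)(s)$. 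Thus $\Gamma_{D_F}=\Gamma_F$ as functions on partial interpretations, and the equivalences for models, admissible, complete, preferred, and grounded interpretations follow immediately from the definitions, which are phrased identically (fixed point, $v\ileq\Gamma(v)$, $v=\Gamma(v)$, $\ileq$-maximal admissible, least fixed point) in both settings.

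For the stable case, I would compare the classical ADF reduct with the $\set{\tvf}$-reduct $D_F^{v,\set{\tvf}}$. In standard ADFs, given a two-valued model $v$, the reduct is usually obtained by restricting to statements mapped to $\tvt$ and replacing atoms no longer present in the reduced set by their $v$-values in the acceptance formulas. The wADF reduct $D_F^{v,\set{\tvf}}$ is defined exactly this way once we instantiate \mbox{$W=\set{\tvf}$}: the surviving statements are those with $v(s)\notin\set{\tvf}$, i.e.\ $v(s)=\tvt$; the links are restricted accordingly; and each acceptance condition is obtained by fixing the values of removed parents to their $v$-values. So the ADF reduct and the $\set{\tvf}$-reduct of $D_F$ are the same object. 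Combined with the already established coincidence of grounded interpretations, the condition ``$v(s)=v_g(s)$ for every surviving $s$'' is the same condition in both frameworks, so $v$ is stable for $F$ iff $v$ is $\set{\tvf}$-stable for $D_F$.

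The only mildly delicate point is avoiding confusion between ``completion'' as defined here (total extension \wrt $\ileq$) and the narrower reading used in the original AAAI paper (replacing $\tvu$ by elements of $V$ only). In the two-valued flat case these coincide because elements of $V=\two$ are exactly the $\ileq$-maximal elements of $\Vu$, so no ambiguity arises; once this is noted, the rest of the argument is routine bookkeeping.
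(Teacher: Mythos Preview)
Your proposal is correct and follows essentially the same approach as the paper: both arguments reduce the theorem to the observation that $\Gamma_{D_F}=\Gamma_F$ (and that the $\set{\tvf}$-reduct coincides with the classical ADF reduct), from which all the semantic equivalences follow directly. Your version simply spells out in more detail the two ingredients the paper only names---that $[v]_c=[v]_2$ and that the flat meet reproduces the three-case definition of $\Gamma_F$---and adds a helpful remark on why the revised completion notion causes no trouble here.
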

\begin{proof} It suffices to show that $\Gamma^{\{\tvt,\tvf\}}_{D_F} = \Gamma_{F}$, i.e., that the characteristic operator for the weighted ADF $D_F$ is the same as the characteristic operator for the non-weighted (standard) ADF $F = (S,L,C)$.
To see this, note that $D_F$ recovers all ingredients of standard ADFs: $\ileq$ is a flat ordering, the same as the ``pre-defined'' one for $F$ on $\{\tvt,\tvf\}$, evaluation of acceptance conditions (or formulas if the representation is different) is the same, and truth values are the same, as well.
For the stable semantics, note that the same notion of reduct is recovered for $\{\tvf\}$-stable semantics.
\end{proof}

\section{Weighted ADFs Over the Unit Interval}\label{sec:unit}

In this section we focus on weighted ADFs over the unit interval, that is, wADFs over \mbox{$V = \unit$}.

We first assume a flat information ordering $\ileq$.
As discussed in \Cref{sec:framework}, we will use propositional formulas $\varphi_s$ to specify acceptance conditions over $\unit$.
In the subsequent examples, we employ a formula evaluation that is defined via structural induction as follows:
$$
\begin{array}{c@{\ }l}
w(\varphi\land\psi) &= \min\set{ w(\varphi), w(\psi) }\mbox{,}\\
w(\varphi\lor\psi) &= \max\set{ w(\varphi), w(\psi) }\mbox{, and}\\
w(\neg\varphi) &= 1-w(\varphi)\mbox{.}
\end{array}
$$
(Clearly, a richer formula syntax and other evaluations known from multi-valued logics are possible, but not the main topic of this paper.)
We furthermore allow (representations of) elements of $V$ to appear as atoms in the propositional formulas $\varphi_s$ and let \mbox{$w(a)= a$} for \mbox{$a \in V$}.
This enables us to fix acceptance degrees for specific nodes, and to express upper and lower bounds.
For instance, the formula $\phi \land 0.7$ expresses that the acceptance degree of a node cannot be higher than 0.7, and similarly $\phi \vee 0.7$ expresses that the acceptance degree cannot be below 0.7.

\begin{example}\label{ex:unit}
Consider wADF $D$ over $\unit$ depicted below.
\begin{center}
  \begin{tikzpicture}[node distance=6mm]
    \node[arg] (a) at (0,0) {$a$};

    \node[arg] (b) at (4,0) {$b$};

    \node[arg] (c) at (2,0) {$c$};

    \node[arg] (d)  at (6,0) {$d$};

        \node (va) [below of=a] {$0.8$};
    \node (vb) [below of=b] {$\neg b$};
    \node (vc) [below of=c] {$a\wedge b$};
    \node (vd) [below of=d] {$\neg b \vee 0.6$};
    \path[link] (a) edge[] (c);
    \path[link] (b) edge[] (c);
    \path[link] (b) edge[loop above] (b);
    \path[link] (b) edge[] (d);
  \end{tikzpicture}
\end{center}
Intuitively, $\varphi_a$ fixes the value of $a$ to 0.8.
$\varphi_b$ expresses self-attack. $\varphi_c$ means $c$ is accepted to the extent $a$ and $b$ are, while
$d$ is attacked by $b$. In addition, it is known, for whatever reason, that the value of $d$ must be at least 0.6.

We represent a partial interpretation $v$ as the tuple $(v(a),v(b),v(c),v(d))$.
By Proposition~\ref{prop:iterativeGroundedFlat}, the grounded interpretation can be obtained by iterating $\Gamma_D$ on the interpretation $(\tvu,\tvu,\tvu,\tvu)$. We obtain the least fixpoint $v_1 = (0.8,\tvu,\tvu,\tvu)$. The (unique) model of $D$ is $v_2 = (0.8,0.5,0.5,0.6)$. This model is $W$-stable for $W = \lbrack 0,0.5\rbrack$. To see this, consider the $v_2,W$-reduct which consists of nodes $a$ and $d$ with reduced acceptance conditions $0.8$ and $0.5 \vee 0.6$, respectively. The grounded interpretation assigns to these nodes exactly the values they have in $v_2$, namely $0.8$ and $0.6$. Note that $v_2$ is not $\lbrack 0,0.5\lbrack$-stable. In this case the reduct is identical to $D$ and the grounded interpretation $v_1$ of $D$ differs from $v_2$.

Interpretations $v_1$ and $v_2$ are also the only complete ones.
An interpretation $v$ is admissible for $D$ if and only if
\begin{enumerate}
\item $v(a) = \tvu$ or $v(a) = 0.8$,
\item $v(b) = \tvu$ or $v(b) = 0.5$, \item $v(c) = \tvu$ (if $v(a) = \tvu$ or $v(b) = \tvu$) or \\ $v(c) = 0.5$ (if $v(a) = 0.8$ and $v(b) = 0.5$), and \item $v(d) = \tvu$ (if $v(b) = \tvu$) or $v(c) = 0.6$ (if $v(b) = 0.5$).
\end{enumerate}
The single preferred interpretation for $D$ is $v_2$.
\hfill$\diamond$
\end{example}

The following lemma will be useful to derive results for wADFs on the unit interval (with flat ordering).
\begin{lemma}
\label{lemma:eval-unit}
Let $\varphi_s$ be a propositional formula over variables $S$ (without constants).
Assume that the formulas are evaluated on values in the unit interval as specified in the beginning of Section~\ref{sec:unit}.
Further, let $v:S \rightarrow \{\tvt,\tvf\}$ be a total interpretation assigning variables in $S$ to either true or false, and
$w: S \rightarrow [0,1]$ be a total interpretation assigning variables in $S$ to a value in $[0,1]$ s.t.\ $v(s)=\tvt$ implies $w(s) \geq 0.5$ and $v(s)=\tvf$ implies $w(s) \leq 0.5$.
If $v$ satisfies $\varphi_s$ then $\varphi_s(w) \geq 0.5$ and if $v$ does not satisfy $\varphi_s$ then $\varphi_s(w) \leq 0.5$.
\end{lemma}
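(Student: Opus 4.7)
My plan is to prove the lemma by structural induction on the formula $\varphi_s$, exploiting the fact that the value $0.5$ is exactly the fixed point of the negation operation $x \mapsto 1-x$ and that $\min$ and $\max$ preserve inequalities against a common threshold.

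For the base case, $\varphi_s$ is a single variable $x \in S$ (the hypothesis excludes constants). If $v$ satisfies $x$ then $v(x) = \tvt$, so by the assumption on $w$ we have $w(\varphi_s) = w(x) \geq 0.5$; if $v$ does not satisfy $x$ then $v(x) = \tvf$ and $w(\varphi_s) = w(x) \leq 0.5$.

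For the inductive step I would handle the three connectives separately, using the induction hypothesis on each immediate subformula. For $\varphi_s = \neg \psi$: if $v$ satisfies $\neg \psi$ then $v$ does not satisfy $\psi$, so by IH $w(\psi) \leq 0.5$, hence $w(\neg\psi) = 1 - w(\psi) \geq 0.5$; the converse direction is symmetric. For $\varphi_s = \psi_1 \land \psi_2$: if $v$ satisfies it then $v$ satisfies both conjuncts, so $w(\psi_i) \geq 0.5$ for $i=1,2$ by IH, and $w(\varphi_s) = \min\{w(\psi_1),w(\psi_2)\} \geq 0.5$; if $v$ does not satisfy it then some $\psi_i$ fails in $v$, so by IH $w(\psi_i) \leq 0.5$ and the minimum is bounded above by $0.5$. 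The case $\varphi_s = \psi_1 \lor \psi_2$ is dual, using $\max$ instead of $\min$.

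I do not expect any real obstacle here; the proof is a textbook structural induction. The only point worth making explicit is why the assumption that $\varphi_s$ contains no constants matters: if a constant $a \in [0,1]$ were allowed as an atom, it would need $a \geq 0.5$ or $a \leq 0.5$ to satisfy the threshold property independently of the Boolean interpretation $v$, which is not guaranteed in general. So I would note at the start that the restriction to variable atoms is exactly what makes the base case go through, and the rest is closure of the property ``the value lies on the correct side of $0.5$'' under $\min$, $\max$, and $x \mapsto 1-x$.
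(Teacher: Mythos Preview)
Your proposal is correct and follows essentially the same approach as the paper's own proof: a structural induction on $\varphi_s$, with the base case handled by the hypothesis on $w$ and the inductive step using that $\min$, $\max$, and $x\mapsto 1-x$ preserve the threshold at $0.5$. Your write-up is in fact more complete than the paper's sketch, since you spell out both directions (satisfies/does not satisfy) for each connective and explain why the ``no constants'' restriction is needed.
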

\begin{proof}
We prove this lemma by structural induction. For the base cases, i.e., atoms and negated atoms, the implications immediately follow.
For compound formulas, i.e., $A \circ B$ with $\circ \in \{\land,\lor\}$ and $\neg A$, if $v$ satisfies $A \lor B$, then either $v(A)$ or $v(B)$ is $\tvt$, and thus $\max\set{ w(A), w(B) } \geq 0.5$. Similarly for $\land$. For negated formulas, if $v$ satisfies $\neg A$, then $v$ does not satisfy $A$.
\end{proof}

We next explore the relation between wADFs over the unit interval and ADFs. For the proposition we identify the ADF truth values $\tvt$ and $\tvf$ with 1 and 0, respectively. We have the following result:

\begin{proposition}
\label{prop:unit-interval-ordering}
  Let \mbox{$D=(S,L,C,V,\ileq)$} be a wADF with no constants other than $0$ or $1$ appearing in any acceptance formula of $C$,
  and
  \mbox{$D'=(S,L,C,\{0,1\},\ileq')$} be its classical version
	(with $\ileq'\,=\,\ileq\! \cap\, (\{0,1,\tvu\}\times\{0,1,\tvu\})$).
  Furthermore, let $v$ be a partial interpretation assigning truth values in \mbox{$\set{0, 1, \tvu}$} only, and \mbox{$s\in S$}.
  \begin{itemize}
  \item If $\Gamma_D(v)(s) \in \set{0,1}$, then $\Gamma_{D'}(v)(s)\in\set{0,1}$.
  \item If $\Gamma_{D'}(v)(s) = \tvu$, then $\Gamma_D(v)(s)=\tvu$.
  \end{itemize}
\end{proposition}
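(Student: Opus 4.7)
The plan exploits the fact that both orderings $\ileq$ and $\ileq'$ are flat, so the meet $\icap$ returns a value distinct from $\tvu$ exactly when the set over which it is taken is a singleton. The two preliminary facts I would establish are (i) every classical completion of $v$ is also a general completion, i.e.\ $[v]'_c \subseteq [v]_c$, and (ii) the acceptance condition $C_s$ maps every total $\{0,1\}$-valued interpretation into $\{0,1\}$.

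For (i), observe that $v$ only takes values in $\{0,1,\tvu\}$, so any total $w:S\to\{0,1\}$ extending $v$ under $\ileq'$ is trivially also a total map $w:S\to[0,1]$ extending $v$ under $\ileq$: replacing $\tvu$ by an element of $\{0,1\}\subseteq[0,1]$ is permitted by both orderings. For (ii), I would do a short structural induction on the acceptance formula $\varphi_s$, using the hypothesis that the only constants allowed are $0$ and $1$, together with the observation that the formula operations $\min$, $\max$, and $x\mapsto 1-x$ all preserve membership in $\{0,1\}$.

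With these ingredients, both bullets follow almost immediately. For the first bullet, $\Gamma_D(v)(s)\in\{0,1\}$ means by flatness of $\ileq$ that $\{\,C_s(w)\mid w\in[v]_c\,\}$ is a singleton $\{x\}$ with $x\in\{0,1\}$. Since $[v]'_c\subseteq[v]_c$ is nonempty, the subset $\{\,C_s(w)\mid w\in[v]'_c\,\}$ is also $\{x\}$, and hence $\Gamma_{D'}(v)(s)=x\in\{0,1\}$. For the second bullet, $\Gamma_{D'}(v)(s)=\tvu$ means, by flatness of $\ileq'$ and (ii), that $\{\,C_s(w)\mid w\in[v]'_c\,\}$ contains at least two distinct elements of $\{0,1\}$; by (i) its superset $\{\,C_s(w)\mid w\in[v]_c\,\}$ likewise contains at least two distinct values, so by flatness $\Gamma_D(v)(s)=\tvu$. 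I do not anticipate any real obstacle: the only slightly subtle point is keeping straight the interaction of the two orderings with the two completion sets, and once (i) and (ii) are recorded the argument is a direct application of the flat-meet-detects-singleton principle.
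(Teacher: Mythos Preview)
Your argument is correct. Both bullets follow cleanly from your two ingredients~(i) and~(ii), and the ``flat meet detects singletons'' observation is exactly what is needed to translate between the two operators.

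Compared with the paper's proof, your route is somewhat more direct. The paper argues by contradiction and invokes Lemma~\ref{lemma:eval-unit}, which relates classical satisfaction of $\varphi_s$ to the weighted value being ${\geq}0.5$ or ${\leq}0.5$; for the second bullet it then has to additionally argue that the value cannot be exactly $0.5$, using the closure of $\{0,1\}$ under $\min$, $\max$, $1-x$ (your~(ii)). You sidestep the threshold-$0.5$ detour entirely: since you only ever evaluate $C_s$ on $\{0,1\}$-valued completions, your closure fact~(ii) immediately pins the outputs to $\{0,1\}$, and the subset relation~(i) does the rest. Both proofs share the same core idea (classical completions form a subset of the general ones), but yours packages it more uniformly and needs one less auxiliary lemma; the paper's lemma, on the other hand, is reusable elsewhere (it is also used in Proposition~\ref{prop:ver-unit}).
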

\begin{proof} Suppose the first item does not hold: $\Gamma_D(v)(s) =1$ and $\Gamma_D'(v)(s)\not\in\set{0,1}$ (symmetric for the case of $0$).
It holds that $\Gamma_D'(v)(s) = \tvu$, and thus, there exists a completion of $v$ to $v'$ (on $\{\tvt,\tvf\}$) s.t.\ $v'$ does not satisfy $\varphi_s$.
Construct an interpretation $w'$ s.t.\ $w'(s) = 1$ iff $v'(s) = \tvt$ and $w'(s) = 0$ iff $v'(s) = \tvf$. We have $w'$ is a completion of $v$ (on the unit interval).
By Lemma~\ref{lemma:eval-unit}, it holds that $\varphi_s(w')\leq 0.5$. This is a contradiction to $\Gamma_D(v)(s) =1$ (all completions on the unit interval have to evaluate to $1$ for $\varphi_s$).

The second item of the proposition can be shown analogously. Suppose the contrary, then there exist two completions of $v$ (on $\{\tvt,\tvf\}$), s.t.\ one satisfies $\varphi_s$ while the other does not. By Lemma~\ref{lemma:eval-unit}, it holds that the
corresponding interpretations on the unit interval (identifying, again, $0$ with $\tvf$ and $1$ with $\tvt$), evaluate once to a value at most $0.5$ and otherwise to a value at least $0.5$. To see that the end result cannot be $0.5$, by definition of the connectives, only values from the interpretation (here only $\{0,1\}$) can be re-produced when evaluating the connectives. Therefore, the characteristic operator $\Gamma_D(v)(s)$ returns $\tvu$.
\end{proof}
This result cannot be strengthened, in particular $\Gamma_{D'}(v)(s)$ may be 1 or 0, yet $\Gamma_D(v)(s) = \tvu$, as illustrated as follows\/:
\begin{example}\label{ex:taut} Consider the graph consisting of nodes $a$, $b$ with acceptance formulas
  \mbox{$\varphi_a= a$} and \mbox{$\varphi_b = a \lor \neg a$}.
  It is easy to see that in the weighted case the grounded interpretation is $\set{ a \mapsto \tvu, b \mapsto \tvu}$.
  In contrast, the standard approach yields the grounded interpretation $\set{ a \mapsto \tvu, b \mapsto 1}$.
  This is due to the fact that \mbox{$a \lor \neg a$} is a tautology in two-valued logic, but not when the unit interval and the above specified formula evaluation is used.
  Here the formula may have any value \mbox{$x\geq 0.5$}.
\hfill$\diamond$
\end{example}

A possible remedy would be to define non-standard interpretations $\land^*,\lor^*$ for the connectives $\land,\lor$, for example\/:
\begin{itemize}
\item $x \land^* y = 1$ if $x > 0.5$ and $y > 0.5$; $x \land^* y = 0$ otherwise
\item $x \vee^* y = 1$ if $x > 0.5$ or $y > 0.5$; $x\lor^* y=0$ otherwise
\end{itemize}
However, this approach appears to throw out the baby with the bath water: the different behavior of wADFs in such examples stems from the fact that they make more fine-grained distinctions. It is thus not unintended. Note also that there is an easy alternative option to specify tautological acceptance conditions for wADFs: simply replace $a \vee \neg a$ with $1$.

So far we have used a flat information ordering with least element $\tvu$ and different elements in $\unit$ incomparable.
Of course, nothing prevents us from choosing a more refined ordering, for instance the ordering $\ileq'$ given by
\begin{gather*}
  x \ileq' y
  \quad\text{iff}\quad
  x\ileq y \text{ or }
  y < x \leq 0.5 \text{ or }
  0.5 \leq x < y
  \end{gather*}
That is, $0.5$ is immediately above $\tvu$, and a value smaller than 0.5 is more informative if it is closer to 0, a value greater than 0.5 is more informative if it is closer to 1.
The pair $(\unit\cup\set{\tvu},\ileq')$ again forms a complete partial order;
for any non-empty \mbox{$X\subseteq\unit\cup\set{\tvu}$}, its greatest lower bound is given by
\begin{gather*}
  \textstyle\icap' X =
  \begin{cases}
    \tvu & \text{if } \tvu\in X \\
    \min X & \text{if } X\subseteq [0.5,1] \\
    \max X & \text{if } X\subseteq [0,0.5] \\
    0.5 & \text{otherwise.}
  \end{cases}
\end{gather*}
The CPO property extends to the pointwise extension of $\ileq'$ to valuations. Thus for any \mbox{$D=(S,L,C,\unit,\ileq')$}, its characteristic wADF operator $\Gamma_D$ is well-defined, in particular its least fixpoint $\lfp\!\left(\Gamma_D\right)$ exists and is uniquely determined.

\begin{example}
  \label{ex:unitrefined}
  Consider again Example \ref{ex:unit}, but this time with information ordering $\ileq'$.  Again we iterate $\Gamma_D$ on the interpretation $(\tvu,\tvu,\tvu,\tvu)$.
  With the extended information ordering we obtain the fixpoint $v = (0.8,0.5,0.5,0.6)$ which is guaranteed to be the grounded interpretation by Proposition~\ref{prop:iterativeGrounded}.
  The admissible semantics is given by all interpretations in
  $$
  \begin{array}{rl}
  \{ \set{a \mapsto \tv{v_a}, b \mapsto \tv{v_b} , c \mapsto \tv{v_c},d \mapsto \tv{v_d}} \mid & \tv{v_a}\in\set{\tvu}\cup [0.5;0.8],\\
                                                                                               & \tv{v_b}\in\set{\tvu,0.5},\\
                                                                                               & \tv{v_c}\in\set{\tvu,0.5},\\
                                                                                               & \tv{v_d}\in\set{\tvu}\cup [0.5;0.6]\}.
  \end{array}
  $$
  As every complete interpretation is admissible and $v$ is $\ileq'$-minimal among all complete interpretations but, in this case, $v$ coincides with the $\ileq'$-maximal admissible interpretation,
  we get that $v$ is the only complete and preferred interpretation as well as the only model.
  Similar as in Example \ref{ex:unit}, the $v,W$-reduct for $W = \lbrack 0,0.5\rbrack$ consists of nodes $a$ and $d$ with reduced acceptance conditions $0.8$ and $0.5 \vee 0.6$, respectively.
  Thus, $v$ is a $W$-stable model. Unlike in Example \ref{ex:unit}, $v$ is also $W'$-stable, for $W' = \lbrack 0,0.5\lbrack$, since it is the grounded interpretation and the ADF coincides with its $v,W'$-reduct.
\hfill$\diamond$
\end{example}

\section{Alternative Valuation Structures}
\label{sec:alternative}
In the last section we considered values in $\unit$. Of course, there are many more options which have been studied intensively in the area of multi-valued logics (an excellent overview was given by \cite{Gottwald15}), e.g.
    \begin{itemize}
      \item $W_m = \{\frac{k}{m-1} \mid 0 \leq k \leq m-1\}$,       \item Belnap's 4-valued system with $\{\emptyset, \{\bot\},\{\top\},\{\bot, \top\}\}$, or
     \item values from an interval $[a,b]$ within the unit interval ($0 \leq a \leq b \leq 1$).
    \end{itemize}

 Independently from the actual choice, as before, given \mbox{$D=(S,L,C,V,\ileq)$},
for any chosen set of truth degrees $V$, an interpretation assigns a value from $\Vu$ to nodes in $S$, and the characteristic operator $\Gamma_D$ works on partial $V$-interpretations, that is, on functions of the type \mbox{$S \to \Vu$} where \mbox{$\tvu\notin V$}.
Recall that acceptance conditions are of the form \mbox{$C_s: (\parents{s} \to V) \to V$}.

For one choice of values, we represent acceptance conditions as propositional formulas, this time interpreted over $V$.
To make ADF techniques work we need to define the evaluation of propositional connectives over $V$, thus specifying which acceptance condition a formula actually represents, and the information ordering, as done for $\unit$ in \Cref{sec:unit}.

The literature on multi-valued logics provides a rich source of alternative valuation structures with different benefits and properties. It also offers a wide range of options regarding different evaluations of propositional formulas (\eg G\"odel, \L{}ukasiewicz, etc.).
The only general constraint is that the information ordering $\ileq$ on \mbox{$V\cup\set{\tvu}$} must form a complete partial order (CPO) with least element $\tvu$.

In the following we illustrate the use of alternative valuation structures using three different examples.

\myparagraph{$\mathbf{W_3}$:}
For the family of values $W_m$, let us exemplify \mbox{$W_3=\set{0,0.5,1}$}.
We define formula evaluation as before, that is, 0, 0.5, and 1 evaluate to themselves, $\land$ to $\min$, $\lor$ to $\max$, and $\neg y$ to $1-y$.
We choose as the information ordering the smallest reflexive relation containing \mbox{$\tvu \ileq x$} for all \mbox{$x \in W_3$}, where the remaining values are incomparable if they are different.
That is, we (again) utilize here a flat ordering.
This fully specifies wADFs based on $W_3$.
Note that $0.5 \neq \tvu$.
This makes perfect sense as saying ``the acceptance degree is 0.5'' is different from saying ``the acceptance degree is unknown''.

\myparagraph{Belnap:}
We show how the truth degrees in Belnap's four-valued logic can be used in wADFs. The truth degrees are
$$\textbf{B} = \{\emptyset, \{\bot\},\{\top\},\{\bot, \top\}\}.$$
As to formula evaluation we use the standard definitions for Belnap's logic: conjunction/disjunction are the infimum/supremum under the truth ordering $\leq_t$; negation swaps $\{\bot\}$ and $\{\top\}$, and leaves the other two values unchanged. The truth ordering is the reflexive closure of\/:
$$  \{\bot\}  \leq_t  \emptyset \leq_t \{\top\} \hspace{1cm}  \{\bot\}  \leq_t  \{\bot, \top\} \leq_t \{\top\}$$
The information ordering is the reflexive closure of\/:
$$ \tvu \ileq \emptyset \ileq \{\top\} \ileq \{\bot, \top\}  \hspace{0.6cm} \tvu  \ileq \emptyset \ileq \{\bot\} \ileq \{\bot, \top\} $$
With these definitions, the operator $\Gamma_D$ and thus the 4-valued wADF system is fully specified. Note again that \mbox{$\tvu \neq \emptyset$};  treating them as identical would yield a different system.

\myparagraph{Intervals:}
Our approach can also handle intervals. Let us illustrate this idea using intervals from within the unit interval as truth degrees, that is we consider truth values in
 $$\INT = \{ \lbrack a,b \rbrack \mid 0 \leq a \leq b \leq 1\}.$$
Formula evaluation can be defined as follows:
      \begin{align*}
      \lbrack a,b \rbrack \land^* \lbrack c,d \rbrack &= \lbrack min(a,c), min(b,d) \rbrack \\
       \lbrack a,b \rbrack \vee^* \lbrack c,d \rbrack &= \lbrack max(a,c), max(b,d) \rbrack \\
      \neg^* \lbrack a,b \rbrack &= \lbrack 1-b,1-a \rbrack
       \end{align*}
The information ordering is the $\subseteq$-least relation satisfying (1) $\tvu \ileq v$ for all $v \in \INT$, and (2)
$$\lbrack a,b \rbrack \ileq \lbrack c,d \rbrack \quad\text{whenever}\quad \lbrack c,d \rbrack \subseteq \lbrack a,b \rbrack, $$
   which fully defines the characteristic operator $\Gamma_D$. 

\section{Computational Complexity}\label{sec:complexity}

In this section we give a preliminary complexity analysis of weighted ADFs.
For this we assume that all acceptance conditions are specified via propositional formulas, which, additionally, may contain constants from a pre-specified and fixed set of values $V$. Further, we assume that evaluating an acceptance condition under an interpretation that assigns no statement to $\tvu$ can be done in polynomial time, and that comparing two values under the, again pre-specified and fixed, information ordering can, likewise, be computed in polynomial time \wrt the size of the given wADF.
That is, both $V$ and $\ileq$ are fixed ($\vert V \vert$ is a constant) and not part of the input for the problems we study in this section.

We first show that, under fixed and finite valuation structures $V$, complexity upper bounds of wADFs remain the same as for classical ADFs~\cite{StrassW2015} for several central computational tasks. As for ADFs, a cornerstone auxiliary complexity result is the following for checking whether a given interpretation is admissible in a given wADF. 

\begin{proposition}
\label{prop:ver-finite-v}
Verifying whether a given interpretation is admissible in wADFs with fixed and finite valuation structures is in \coNP{}.
\end{proposition}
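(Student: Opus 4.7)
The plan is to show that the complement problem -- deciding that a given interpretation $v$ is \emph{not} admissible -- lies in \NP{}, from which the \coNP{} upper bound for admissibility follows immediately. First I would rewrite the admissibility condition $v \ileq \Gamma_D(v)$ more explicitly. Since $\Gamma_D(v)(s) = \icap\set{C_s(w)\guard w\in[v]_c}$ is the greatest lower bound of a set, and $v(s)\ileq \icap X$ holds iff $v(s)$ is itself a lower bound of $X$, admissibility is equivalent to the statement that for every $s\in S$ and every completion $w\in[v]_c$, the inequality $v(s)\ileq C_s(w)$ holds.

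Negating this characterization yields a natural NP-witness for non-admissibility: a statement $s\in S$ together with a completion $w\in[v]_c$ such that $v(s)\not\ileq C_s(w)$. Since $V$ is fixed and finite, each statement's value has constant-size encoding, so $w$ has polynomial size in $|S|$ and can be guessed nondeterministically in polynomial time. Checking that $w$ is indeed a completion of $v$ amounts to verifying $v(s')\ileq w(s')$ for each $s'\in S$, which by the assumption that $\ileq$ is polynomial-time decidable takes polynomial time. Evaluating $C_s(w)$ is polynomial by the assumption on acceptance conditions (since $w$ assigns no statement to $\tvu$), and the final comparison $v(s)\not\ileq C_s(w)$ is again polynomial.

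Thus the complement of admissibility verification is in \NP{}, whence admissibility verification itself is in \coNP{}. I do not anticipate a hard step here; the only thing that requires care is the use of the assumption that $V$ is fixed (so encodings of values are of constant size and completions are polynomially bounded) together with the assumption that acceptance conditions over $V$-assignments can be evaluated in polynomial time. Without these assumptions -- for example, if $V$ were part of the input or infinite -- guessing and evaluating $w$ would no longer fit in \NP{}, so stating them explicitly at the beginning of the proof is essential.
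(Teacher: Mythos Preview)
Your proof is correct and in fact more direct than the paper's. Both arguments show the complement is in \NP{}, but they differ in the witness used. The paper guesses a statement $s$ together with \emph{constantly many} completions (at most $|V|$, one per possible value) in order to approximate the greatest lower bound $\icap\set{C_s(w)\mid w\in[v]_c}$ from above by some $y$, and then checks $v(s)\not\ileq y$; the finiteness of $V$ is used to bound the number of completions needed to hit every value in the image set. You instead exploit the elementary lattice fact that $v(s)\ileq\icap X$ holds precisely when $v(s)$ is a lower bound of $X$, which reduces non-admissibility to the existence of a \emph{single} completion $w$ with $v(s)\not\ileq C_s(w)$. This avoids computing any greatest lower bound and yields a smaller witness; the finiteness of $V$ is then only needed to ensure that one completion can be encoded in polynomial size. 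Your route is thus slightly more elementary, while the paper's approach has the side benefit of showing how $\Gamma_D(v)(s)$ itself can be pinned down with constantly many oracle-style guesses, a fact reused in later propositions.
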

\begin{proof}
We show the membership result for the complementary problem.
A given interpretation $v$ over set of values $V$ is not admissible in a given wADF $D = (S,L,C,V,\ileq)$ iff
$v \not \ileq \Gamma_D(v)$ iff
$\exists s \in S$ s.t.\ $v(s) \not \ileq \Gamma_D(v)(s)$ iff
$v(s) \not \ileq \icap \set{ C_s(w) \guard w\in[v]_c }$.
It holds that $\set{ C_s(w) \guard w\in[v]_c } \subseteq V$ and there exists a subset $W \subseteq [v]_c$ of completions of $v$ s.t.\
$\set{ C_s(w) \guard w\in[v]_c } = \set{ C_s(w) \guard w \in W } $ and $\vert W \vert \leq \vert V \vert$ (we only need one completion per value). By definition we have $\vert V \vert$ is constant, and therefore $\vert W \vert $ is bounded by a constant.

Non-deterministically guess an $s \in S$, $\vert V \vert$-many completions of $v$, say as set $X$, and compute $\icap \set{ C_s(w) \guard w \in X }$.
It holds that $\set{ C_s(w) \guard w \in X } \subseteq \set{ C_s(w) \guard w\in[v]_c }$.
Now, compute the greatest lower bound $y$ wrt the completions $X$, which acts as an approximation of the greatest lower bound of all completions.
The greatest lower bound $y =\icap \set{ C_s(w) \guard w \in X }$ can be computed in polynomial time, since cardinality of $X$ is bounded by a constant, by assumption evaluation $C_s(w)$ can be computed in polynomial time, and one can check for each value in $V$ whether this value is the greatest lower bound of $\set{ C_s(w) \guard w \in X }$.
It holds that $y$ approximates $\icap \set{ C_s(w) \guard w\in[v]_c }$ by: $\icap \set{ C_s(w) \guard w\in[v]_c } \ileq y$.
The last inequality follows since $y$ is the greatest lower bound of a subset of $\set{ C_s(w) \guard w\in[v]_c }$, and the information ordering and values form a CPO; thus $y$ cannot be lower than the greatest lower bound of the superset, and not incomparable, which would violate uniqueness of the greatest lower bound in the superset.
Having computed $y$, we check whether $v(s) \ileq y$. If $v(s) \not \ileq y$, then $v(s) \not \ileq \icap \set{ C_s(w) \guard w\in[v]_c }$ ($v$ is not admissible). On the other hand, if $v(s) \ileq \icap \set{ C_s(w) \guard w\in[v]_c }$, then $v(s) \ileq y$. Finally, the actual $\icap \set{ C_s(w) \guard w\in[v]_c }$ can always be found by guessing the corresponding completions.

Thus, the problem of checking whether $v$ is not admissible is in \NP, and, therefore, the problem of verifying whether $v$ is admissible in $D$ is in \coNP.
\end{proof}

Based on the previous result, checking whether there exists an admissible interpretation for a given wADF that assigns a given value, different to $\tvu$, to a given statement, has the same complexity upper bound as the analogous task of credulous acceptance on ADFs. The complexity class $\SigmaP{2}$ contains all problems solvable via a non-deterministic polynomial-time algorithm that has access to an \NP{}-oracle (which can solve problems in \NP{} in constant time).

\begin{proposition}
\label{prop:cred-finite-v}
Checking whether there is an admissible interpretation assigning a given statement a given value for wADFs with fixed and finite valuation structures is in $\SigmaP{2}$.
\end{proposition}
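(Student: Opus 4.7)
The plan is to give a standard guess-and-check procedure witnessing membership in $\SigmaP{2} = \NP^{\coNP}$. Given a wADF $D=(S,L,C,V,\ileq)$, a statement $s\in S$, and a target value $x\in V$, a nondeterministic polynomial-time machine first guesses a candidate interpretation $v:S\to \Vu$, then checks that $v(s)=x$ and queries an oracle to verify that $v$ is admissible.

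First I would observe that the guess $v$ has polynomial size. Since $V$ and $\ileq$ are fixed (in particular $\vert V\vert$ is constant), each truth value $v(t)$ for $t\in S$ has constant representation size, so $v$ can be described using $O(\vert S\vert)$ bits and the guess is clearly polynomial in the size of the wADF. The equality $v(s)=x$ is then trivially checkable in polynomial time. Next I would appeal to Proposition~\ref{prop:ver-finite-v}: verifying admissibility of $v$ in $D$ lies in \coNP{}, so this check is precisely one query to a \coNP{}-oracle (equivalently, by complementation, to an \NP{}-oracle).

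Putting the pieces together, the overall procedure is a polynomial-time nondeterministic computation augmented with a single oracle call to a language in \coNP{}, and hence witnesses membership of the decision problem in $\NP^{\coNP}=\SigmaP{2}$. The main, and essentially only, nontrivial ingredient is the already-established admissibility verification bound from Proposition~\ref{prop:ver-finite-v}; the only subtlety to take care of is ensuring that the guessed interpretation has polynomial size, which is immediate from the assumption that $V$ (and hence each individual truth value) is of constant size. No additional structural argument about the information ordering is needed beyond what was already used for verification.
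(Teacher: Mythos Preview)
Your proof is correct and follows essentially the same approach as the paper: nondeterministically guess an interpretation $v$ with $v(s)=x$ and then use Proposition~\ref{prop:ver-finite-v} as a \coNP{} oracle to verify admissibility. Your additional remarks on the polynomial size of the guessed interpretation are a welcome elaboration but do not diverge from the paper's argument.
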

\begin{proof}
For given wADF $D=(S,L,C,V,\ileq)$, $a \in S$, and $x \in V$, non-de\-ter\-min\-is\-ti\-cal\-ly guess a partial interpretation $v$ with $v(a) = x$ and check whether $v$ is admissible in $D$ (a problem in \coNP{} due to Proposition~\ref{prop:ver-finite-v}).
\end{proof}

Checking whether all preferred interpretations for a given wADF assign a given value, different to $\tvu$, to a given statement, has the same complexity upper bound as the analogous task of skeptical acceptance on ADFs. The class $\PiP{3}$ is the complement class of $\SigmaP{3}$ (contains the complements of all problems in $\SigmaP{3}$), which in turn contains all problems solvable via a non-deterministic polytime algorithm with access to a $\SigmaP{2}$-oracle.

\begin{proposition}
\label{prop:skept-finite-v}
Checking if all preferred interpretations assign a given statement a given value for wADFs with fixed and finite valuation structures is in $\PiP{3}$.
\end{proposition}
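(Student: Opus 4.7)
The plan is to establish membership in $\PiP{3}$ by showing that the complementary problem---whether there exists a preferred interpretation $v$ with $v(a) \neq x$---lies in $\SigmaP{3}$. The overall structure will be an existential guess of a candidate interpretation followed by a $\PiP{2}$ verification that it is preferred.

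First I would observe that, since $V$ is of fixed finite size, any partial interpretation $v: S \to \Vu$ has a polynomial-size representation. The algorithm for the complement begins by non-deterministically guessing such a $v$ with $v(a) \neq x$ (the inequality check is polynomial). The remaining work is to verify that $v$ is preferred, that is, $\ileq$-maximal admissible for $D$.

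The heart of the argument is to show that preferred-verification lies in $\PiP{2}$. By Proposition~\ref{prop:ver-finite-v}, admissibility of $v$ is in \coNP{}. For $\ileq$-maximality, I would rephrase the requirement as: for every partial interpretation $v'$ with $v \ilt v'$, $v'$ is not admissible. Since ``$v'$ is not admissible in $D$'' is in \NP{} (the complement of Proposition~\ref{prop:ver-finite-v}), universally quantifying over all polynomial-size candidates $v'$ produces a $\PiP{2}$ check. The conjunction of the \coNP{} admissibility test and the $\PiP{2}$ maximality test stays in $\PiP{2}$, so the outer existential guess places the complement in $\SigmaP{3}$ and the original problem in $\PiP{3}$.

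The main obstacle is keeping the quantifier alternation tight: one must recognise that the inner admissibility check is \coNP, not harder, so that the maximality test costs only one additional quantifier alternation rather than pushing the verification into $\PiP{3}$ and the entire argument into $\PiP{4}$. A useful sanity check is that the analogous skeptical preferred reasoning problem for classical ADFs (where $\vert V \vert = 2$) sits at exactly this level, and the argument degenerates to that bound when $V = \{\tvt,\tvf\}$, consistent with Theorem~\ref{thm:generalization}.
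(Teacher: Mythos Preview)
Your proposal is correct and follows essentially the same approach as the paper: reduce to the complement, guess a candidate interpretation, and verify preferredness via a $\PiP{2}$ check that ultimately rests on Proposition~\ref{prop:ver-finite-v}. The only cosmetic difference is that the paper bundles the admissibility and maximality tests into a single $\SigmaP{2}$ verification of ``$v$ is not preferred'' (guess $v'$ strictly above $v$, then use an \NP{} oracle for both the admissibility of $v'$ and the non-admissibility of $v$), whereas you separate the \coNP{} admissibility check from the $\PiP{2}$ maximality check and take their conjunction; the quantifier structure is identical.
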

\begin{proof}
Let $D=(S,L,C,V,\ileq)$ be a wADF, $a \in S$, and $x \in V$. To check whether a given interpretation $v$ is preferred in $D$, consider the complementary problem: checking whether $v$ is not preferred in $D$. Non-deterministically guess $v'$ with $v \ileq v'$, and, via an \NP-oracle, both check whether $v'$ is admissible in $D$ and whether $v$ is admissible in $D$. If $v'$ is admissible or $v$ is not admissible, we can directly infer that $v$ is not preferred in $D$. Thus, checking whether $v$ is preferred in $D$ is in $\PiP{2}$.

To check whether all preferred interpretations of $D$ assign $s$ to $x$, consider again the complementary problem: to check whether there exists a preferred interpretation assigning $s$ to a value different than $x$. Non-deterministically guess a partial interpretation $v$ with $v(s) \not = x$ and check whether $v$ is preferred in $D$ via a $\PiP{2}$-oracle. Thus, the problem of verifying whether all preferred interpretations of $D$ assign $s$ to $x$ is in $\PiP{3}$.
\end{proof}

Analogously to ADFs, the same complexity upper bound for existence of stable models can be derived for wADFs with fixed and finite valuation structures. The assumed values can be arbitrarily chosen among the fixed $V$, but are given as input, as well.

\begin{proposition}
\label{prop:existence-stb}
Checking existence of stable models is in $\SigmaP{2}$ for wADFs with fixed and finite valuation structures.
\end{proposition}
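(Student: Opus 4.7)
The plan is a guess-and-verify procedure in $\SigmaP{2}$: first nondeterministically pick a total interpretation $v: S \to V$ (polynomial-sized since $|V|$ is a constant), then verify in polynomial time with an \NP{}-oracle that $v$ is a $W$-stable model of the given wADF $D$. The verification decomposes into three parts: (i) $v$ is a model, that is, $v = \Gamma_D(v)$; (ii) construction of the $v,W$-reduct $D^v_W$; and (iii) computation of the grounded interpretation $v_g$ of $D^v_W$ and pointwise comparison with $v$ on $S^v_W$.

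Parts (i) and (ii) are unproblematic. Since $v$ is total, $[v]_c = \set{v}$, so $\Gamma_D(v)(s) = C_s(v)$, which is polynomial-time computable by the standing assumptions on evaluating acceptance conditions; hence checking $v = \Gamma_D(v)$ is deterministic polynomial. The reduct $D^v_W$ is obtained by substituting the $v$-values of the removed parents into each remaining acceptance formula, again in polynomial time.

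The critical part is (iii). Here I would appeal to Proposition~\ref{prop:iterativeGroundedFlat}: with $V$ finite and $S$ finite the lattice $(\set{v':S\to\Vu},\ileq)$ is finite, so every ascending chain has polynomially bounded length and consequently $v_g = \Gamma_{D^v_W}^n(\vu)$ for some polynomially bounded $n$. Each single application of $\Gamma_{D^v_W}$ to a partial interpretation $v'$ is computable in polynomial time with \NP{}-oracle access: for each statement $s$ and each candidate value $y \in \Vu$ (only $|V|+1$ of them, a constant) decide whether $y = \icap \set{C_s(w) \guard w \in [v']_c}$ by a \coNP{}-style lower-bound test (no completion $w$ satisfies $y \not\ileq C_s(w)$) together with an \NP{}-style maximality test (for each strict $\ileq$-successor $y'$ of $y$ exhibit a completion with $y' \not\ileq C_s(w)$); both sub-checks are single \NP{}-oracle queries. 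Once $v_g$ is produced, comparing it to $v$ on $S^v_W$ is trivial.

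Combining the nondeterministic guess of $v$ with the polynomial-time, \NP{}-oracle-based verification places the problem in $\SigmaP{2}$. I expect the main obstacle to be justifying that the grounded interpretation of the reduct can be computed within the $\SigmaP{2}$ budget; this is resolved by the finiteness of $V$, which simultaneously bounds the depth of the valuation lattice polynomially in $|S|$ and renders each application of the characteristic operator tractable relative to an \NP{}-oracle.
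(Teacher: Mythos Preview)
Your proposal is correct and follows essentially the same guess-and-verify structure as the paper: guess a total $v$, check it is a model, build the reduct, compute the grounded interpretation of the reduct by iterating $\Gamma$ with an \NP{}-oracle, and compare. Two minor points where you are actually more careful than the paper: you observe that model-checking a \emph{total} $v$ needs no oracle (since $[v]_c=\{v\}$), and you explicitly bound the number of iterations via the height of the product CPO (which is $O(|S|)$ since $|V|$ is constant), whereas the paper simply asserts the iteration terminates in polynomial time.
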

\begin{proof}
Assume a given wADF $D=(S,L,C,V,\ileq)$, with $V$ fixed.
We show a polynomial non-deterministic algorithm, with access to an $\NP$ oracle, that decides whether $D$ has a $W$-stable model.
First, non-deterministically construct an interpretation $v$.
Observe that computing the result of $\Gamma^V_R(v')(s)$ can be done in polynomial time with access to an $\NP$ oracle for any given $v'$: checking whether $\Gamma^V_R(v')(s) \not = a$, $a \in V$, is in $\NP$ (non-deterministically guess a completion and evaluate the acceptance condition).
To check whether $v$ is a model of $D$ it is sufficient to check whether $\Gamma_D(v) = v$ and whether $v$ assigns no argument to $\tvu$. By the observation before, polynomially many $\NP$ calls are sufficient.

Next, consider the $v,W$-reduct $R$. It is immediate that the arguments and links of this reduct can be constructed in polynomial time. For the acceptance conditions, it is sufficient to restrict these to the changed parent relation (completions are fixed for non-parents), and, thus, acceptance conditions need not be modified.
To verify stability of $v$, it suffices to compute the grounded interpretation of $R$. To achieve this, iterate over each statement $s$ and compute $\Gamma^V_R(v')(s)$, with $v'$ being the interpretation assigning each statement to $\tvu$.
Applying this procedure for each statement until a fixed point is reached can, likewise, be done in polynomial time (with access to an $\NP$ oracle). Comparing the grounded interpretation of $R$ to $v$ can be done in polynomial time.
\end{proof}

Complexity lower bounds depend on the fixed $V$, information ordering, and formula evaluation. Non-weighted ADFs (\ie \mbox{$V=\two$}) are an example where, for the corresponding fixed components, the complexity lower bounds match the previously shown upper bounds.

Finally, we show a result for wADFs over the unit interval. As before, we make the same assumptions on the acceptance conditions, but let $V$ be the unit interval and assume a flat information ordering ($\tvu$ is strictly lower than all other elements, and all other elements are incomparable), and assume formula evaluation as defined in \Cref{sec:unit}.
Although ADFs are not a special case, \wrt all semantics, of such wADFs, \coNP{}-hardness of verifying whether a given interpretation is admissible follows from a similar argument as for classical (non-weighted) ADFs.

\begin{proposition}
\label{prop:ver-unit}
Verifying whether a given interpretation is admissible in wADFs over the unit interval with flat information ordering is \coNP{}-hard.
\end{proposition}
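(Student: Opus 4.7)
The plan is to reduce Boolean UNSAT (which is \coNP-complete) to admissibility verification. Given a propositional CNF formula $\psi$ over variables $x_1,\ldots,x_n$, I would construct the wADF $D=(S,L,C,\unit,\ileq)$ with statements $S=\set{x_1,\ldots,x_n,s}$, links $L=\set{(x_i,s)\mid 1\le i\le n}$, trivial leaf acceptance conditions $\varphi_{x_i}=0$, and the key acceptance condition $\varphi_s=\psi\lor 0.5$. The candidate interpretation is $v(x_i)=\tvu$ for each $i$ and $v(s)=0.5$; the construction is clearly polynomial, so it suffices to show that $v$ is admissible in $D$ iff $\psi$ is unsatisfiable.

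First I would dispose of the easy parts of admissibility. Since $\tvu$ is the bottom element of the flat ordering, $v(x_i)\ileq\Gamma_D(v)(x_i)$ holds automatically for every $i$. Admissibility therefore reduces to $v(s)\ileq\Gamma_D(v)(s)$, and because $v(s)=0.5\neq\tvu$, the flat ordering forces this to be equivalent to $\Gamma_D(v)(s)=0.5$. Unfolding the operator and using that $v(x_i)=\tvu$ for every $i$, the completions $w\in[v]_c$ range exactly over all total interpretations that send each $x_i$ to an arbitrary value in $\unit$ (and fix $w(s)=0.5$). Writing $\vec y=(w(x_1),\ldots,w(x_n))$, this yields
\begin{gather*}
\Gamma_D(v)(s)=\icap\set{\max(\psi(\vec y),0.5) \guard \vec y\in\unit^n}.
\end{gather*}
Because a flat meet of non-$\tvu$ values equals a single value $x$ exactly when every element of the set is $x$, the condition $\Gamma_D(v)(s)=0.5$ is equivalent to $\psi(\vec y)\le 0.5$ for every $\vec y\in\unit^n$.

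At this point I would invoke Lemma~\ref{lemma:eval-unit}. If $\psi$ is satisfied by some Boolean $v_B$, then the matching $\vec y\in\set{0,1}^n$ gives $\psi(\vec y)=1>0.5$, so $v$ is not admissible. Conversely, if some $\vec y\in\unit^n$ has $\psi(\vec y)>0.5$, then the contrapositive of Lemma~\ref{lemma:eval-unit} (applied to the Boolean rounding of $\vec y$ at threshold $0.5$) produces a Boolean satisfying assignment of $\psi$. Combined, $v$ is admissible in $D$ iff $\psi$ is unsatisfiable, as required.

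The subtle point, and the reason a direct transcription of the classical ADF \coNP-hardness proof fails, is that ``$\psi(\vec y)=1$ for every $\vec y\in\unit^n$'' is strictly stronger than Boolean tautology: a classical tautology like $x\lor\neg x$ attains $0.5$ at $\vec y=0.5$ under the $\min,\max,1-\cdot$ semantics, so the naive choice $\varphi_s=\psi$ with $v(s)=1$ would already falsify admissibility regardless of $\psi$. The role of the constant $0.5$ inside $\varphi_s$ is precisely to collapse that discrepancy: tucking $\lor\,0.5$ into the acceptance formula turns the fixed-point condition into ``$\psi\le 0.5$ everywhere on $\unit^n$'', which by Lemma~\ref{lemma:eval-unit} coincides exactly with Boolean UNSAT.
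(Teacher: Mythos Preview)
Your proof is correct and follows essentially the same strategy as the paper's: construct a wADF with the Boolean variables as leaves set to $\tvu$ and one distinguished node whose acceptance formula combines $\psi$ with a constant so that, via Lemma~\ref{lemma:eval-unit}, admissibility at that node coincides with a \coNP-complete Boolean property. The only difference is a harmless duality---the paper reduces from \textsc{Tautology} using $\varphi_t=\psi\land x$ with $0<x<0.5$ and target value $x$, whereas you reduce from \textsc{Unsat} using $\varphi_s=\psi\lor 0.5$ and target value $0.5$; both exploit the same threshold phenomenon identified in your final paragraph.
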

\begin{proof}
We reduce the problem of checking whether a given propositional formula $\psi$  over variables $B$ is a tautology. W.l.o.g., we assume that $\psi$ does not contain any constants (i.e.\ does not contain $\top$ or $\bot$). Construct wADF $D=(B \cup \{t\},C)$ with
\begin{itemize}
 \item $\varphi_b = b$ with $b \in B$; and
 \item $\varphi_t = \psi \land x$ with $0 < x < 0.5$ (arbitrary choice).
\end{itemize}
The wADF $D$ can be constructed in polynomial time.
We claim that $v$ is admissible in $D$ with $v(b) = \tvu$ and $v(t) = x$ iff $\psi$ is a tautology.
Assume $\psi$ is a tautology. Then any total \emph{two-valued} interpretation (assigning only true or false) satisfies $\psi$. Thus, due to Lemma~\ref{lemma:eval-unit}, for any total interpretation $w: B \cup \{t\} \rightarrow [0,1]$ we have $w(\psi) \geq 0.5$, and, in turn, $w(\varphi_t) = x$ ($\min\set{x,w(\psi)}$ always equals $x$). This means $v$ is admissible in $D$.

Assume $\psi$ is not a tautology. Then there exists a two-valued interpretation $w$ s.t.\ $w(\psi) = \tvf$. We show that $v'$ with
$$v'(b) = \begin{cases}
          1 \text{ if } w(b) = \tvt\\
          0 \text{ if } w(b) = \tvf
         \end{cases}$$
for $b \in B$ and $v'(t) = x$ is a completion of $v$, i.e., $v' \in [v]_c$, and $v'(\varphi_t) = 0$. To show that $v' \in [v]_c$, consider that $v(b) \ileq v'(b)$ for all $b \in B$ and $v'(t) = v(t)$.
Since $v'$ assigns either $0$ or $1$ to any statement in $B$ it holds that $v'(\psi)$ is either $0$ or $1$ (all logical connectives are defined via minimum or maximum functions, or $1-y$). Due to Lemma~\ref{lemma:eval-unit}, $v'(\psi)$ cannot be $1$ (since $w$ does not satisfy $\psi$). Thus, $v'(\varphi_t) = \min\set{0,x} = 0$. This means, there exists a completion of $v$, namely $v'$, and $v(t) = x \not = 0 = v'(t)$, and, in turn, that $v$ is not admissible.
\end{proof}

\section{Related Work and Conclusions}\label{sec:conclusions}

In this paper we introduced a framework for defining weighted ADFs, a generalization of ADFs allowing to assign acceptance degrees to arguments.
The framework is fully flexible regarding the choice of acceptance degrees and their associated
information ordering.
We have provided definitions of the main semantics and showed a number of properties together with a
preliminary complexity analysis.

There is quite some work on weighted argumentation frameworks, and even a section entitled ``Weighted ADFs'' in \cite{BrewkaW2010}. In many cases weights of some sort are added to the \emph{links} in the argument graph, not to the nodes. For instance, Brewka and Woltran use weights on links to simplify the definition of acceptance conditions, an idea that has later been extended to the GRAPPA framework \cite{BrewkaW2014}.
In \cite{DunneHMPW11} weights on links are used as an ``inconsistency budget'': attacks may be disregarded as long as the sum of the weights of disregarded attacks remains under some threshold.

Here we focus on papers assigning acceptance degrees to argument nodes.
One such approach is Gabbay's equational theory of argumentation frameworks \cite{Gabbay12}. He allows for values in the unit interval and represents AFs in an equational form, where the equations specify certain constraints value assignments need to satisfy.

There are also probabilistic extensions of AFs, e.g.\ \cite{DungT10,LiON11,Hunter13,HunterT14b,HunterT14}
(for a complexity analysis for probabilistic AFs see \cite{FazzingaFP15}),
and even of ADFs \cite{DoderP2014}. The main idea is to generate several standard AFs (resp.\ ADFs) which represent the possible situations
induced by the probabilities. The latter can be assigned to arguments, attacks, and in case
of ADFs, to acceptance conditions. The evaluation of frameworks generated this way
 follows the standard approach, and the results of these evaluations are aggregated accordingly.
The behavior of the semantics is thus triggered via all relevant subgraphs. A related approach in a multi-valued setting is
\cite{Dondio14,Dondio17}.

Social AFs~\cite{Leite11} extend standard AFs by adding to each argument associated numbers of positive and negative votes.
The semantics describe how the votes propagate
through the network, yielding a non-linear system of equations.
Recently, several properties and semantics for weighted AFs have
been proposed in \cite{Amgoud17a,Amgoud17b}. In those works, weights
for arguments are also given from the unit interval, interpreted
in the sense that the greater the value the more acceptable the argument.
The focus is on the definition of new  semantics dedicated for weighted AFs, rather than  on
generalizing standard semantics.
However, the properties proposed in \cite{Amgoud17b} adapted to our setting are of
interest and thus
are on our agenda
for future work.

Finally in \cite{BesnardH01} acceptance grades of arguments are derived from the structure of the argument tree. The authors attempt to ``\emph{provide an abstraction of an argument tree in the form of a single number}''. In a similar vein
Grossi and Modgil
\cite{GrossiM15}
derive acceptance grades from the structure of the underlying AF, e.g.\ the number of attacks against which a particular argument is defended. These approaches are orthogonal to ours.

Our generalization of ADFs differs significantly from the mentioned papers in at least the following respects:
\begin{enumerate}
\item We are more general than existing work (with the exception of the work of \cite{DoderP2014}) in taking ADFs rather than AFs as starting point.
\item Rather than focusing on a single set of acceptance values, we provide a framework where the values can be freely selected based on the needs of a particular application. \item Our semantics are a direct generalization of the operator-based ADF semantics and does not require the computation and aggregation of results for various subgraphs. Moreover, we obtain reasonable results also in cases where equational approaches do not have solutions.
    \item Finally, the choice of an adequate information ordering allows us to do some fine tuning which is not possible in any approach we are aware of.
\end{enumerate}

As to future work, we first want to explore restricted subclasses of weighted ADFs.
In particular, we would like to exploit the known concept to express
Dung AFs as ADFs (see \cite{BrewkaESWW2013}, Theorem 2) in order to investigate
a new form of weighted AFs as a subclass of weighted ADFs. Our general definitions of the standard semantics for weighted ADFs will readily deliver natural
definitions of semantics for such weighted AFs.
Furthermore, the subclass of \emph{bipolar} ADFs has been recognised as a useful class, as they are strictly more expressive than AFs while of equal computational complexity~\cite{StrassW2015,strass15expressiveness,linsbichler16auniform}, so we intend to investigate \emph{weighted bipolar} ADFs.
A first step would be the generalization of supporting and attacking links to the multi-valued setting, for example via regarding  of acceptance functions' monotonicity and antimonotonicity in single (function) arguments~\cite{baumann-strass17on-the-number}.

We also would like to explore an idea that goes back to \cite{BogaertsVD16}.
In our approach, as in standard AFT, interpretations \mbox{$v:S\to V$} of atoms $S$ with truth values $V$ are approximated by functions \mbox{$v':S\to\Vu$} with \mbox{$\Vu=V\cup\set{\tvu}$} for \mbox{$\tvu\notin V$}.
Such three-valued/partial interpretations consequently represent the set of their completions.
However, not all sets of total interpretations can be represented as completions of a partial interpretation. This is due to the fact that partial interpretations either assign a specific value, or leave the value completely undefined. This suggests the following\/:
  A \emph{generalized partial interpretation} ($\gpi$) of $S$ in $V$ is a total function \mbox{$v:S \to 2^V\setminus\set{\emptyset}$}, that is, a $\gpi$ assigns to each element of $S$ a non-empty subset of the values in $V$.
  In this new setting, the total function \mbox{$w:S \to V$} is a \emph{completion of $v$} if and only if for all \mbox{$s \in S$}, we have \mbox{$w(s) \in v(s)$}.
Based on the notion of a $\gpi$ we can generalize the characteristic ADF operator $\Gamma_D$ to operate on gpis rather than partial interpretations.
For each node $s$, the revised gpi $\Gamma_D(g)$ returns the set of values that are obtained by evaluating the acceptance condition of $s$ under any completion of the input gpi $g$. A further investigation of this topic is on our agenda.

\section*{Acknowledgments}
This work has been supported by the Deutsche Forschungsgemeinschaft (DFG) under grant BR 1817/7-2 and by the Austrian Science Fund (FWF): I2854 and P30168-N31.

\small
\bibliographystyle{plain}

\end{document}